\theoremstyle{definition}
\theoremstyle{definition}
\newtheorem*{remark}{Remark}
\definecolor{vegaBlue}{HTML}{1f77b4}  \definecolor{vegaOrange}{HTML}{ff7f0e}  \definecolor{vegaGreen}{HTML}{2ca02c}  \definecolor{lightGray}{HTML}{e0e0e0} 
\newcommand{\xmark}{\ding{55}}  \newcommand{\cmark}{\ding{51}}  
\newenvironment{proofsketch}{\par{\scshape Proof Sketch.}~}{\hfill$\qed$\par}
\newcommand{\parh}[1]{\smallskip\noindent\textbf{#1}}
\newcommand{\tool}{\textsc{ALVGL}\xspace}
\newcommand{\F}{Fig.}
\newcommand{\T}{Table}
\renewcommand{\S}{Sec.}
\newcommand{\A}{Alg.}
\newcommand{\Lem}{Lemma}
\newcommand{\Cor}{Corollary}
\newcommand{\Prop}{Proposition}
\newcommand{\ignore}[1]{}
\author{Pingchuan Ma}
\email{pma@zjut.edu.cn}
\affiliation{\institution{Zhejiang University of Technology}
  \country{China}
}
\author{Qixin Zhang}
\affiliation{\institution{Nanyang Technological University}
  \country{Singapore}
}
\author{Shuai Wang}
\email{shuaiw@cse.ust.hk}
\affiliation{\institution{The Hong Kong University of Science and Technology}
  \country{Hong Kong SAR}
}
\author{Dacheng Tao}
\affiliation{\institution{Nanyang Technological University}
  \country{Singapore}
}
\renewcommand\footnotetextcopyrightpermission[1]{} 
\keywords{causal discovery, Bayesian network}
\begin{document}
\title{Efficient Differentiable Causal Discovery via Reliable Super-Structure
Learning}

\begin{abstract}

Recently, differentiable causal discovery has emerged as a promising approach to
improve the accuracy and efficiency of existing methods. However, when applied
to high-dimensional data or data with latent confounders, these methods, often
based on off-the-shelf continuous optimization algorithms, struggle with the
vast search space, the complexity of the objective function, and the nontrivial
nature of graph-theoretical constraints. As a result, there has been a surge of
interest in leveraging super-structures to guide the optimization process.
Nonetheless, learning an appropriate super-structure at the right level of
granularity, and doing so efficiently across various settings, presents
significant challenges.

In this paper, we propose \tool, a novel and general enhancement to the
differentiable causal discovery pipeline. \tool employs a sparse and low-rank
decomposition to learn the precision matrix of the data. We design an ADMM
procedure to optimize this decomposition, identifying components in the
precision matrix that are most relevant to the underlying causal structure.
These components are then combined to construct a super-structure that is
provably a superset of the true causal graph. This super-structure is used to
initialize a standard differentiable causal discovery method with a more
focused search space, thereby improving both optimization efficiency and
accuracy.

We demonstrate the versatility of \tool by instantiating it across a range of
structural causal models, including both Gaussian and non-Gaussian settings,
with and without unmeasured confounders. Extensive experiments on synthetic and
real-world datasets show that \tool not only achieves state-of-the-art accuracy
but also significantly improves optimization efficiency, making it a reliable
and effective solution for differentiable causal discovery.

\end{abstract}

\maketitle

\section{Introduction}

Causal discovery is a fundamental problem in machine learning and
statistics~\cite{spirtes2000causation, zhang2008completeness}, aiming to infer
the underlying causal structure from observational data. It has wide-ranging
applications across fields such as epidemiology, economics, and the social
sciences. The task becomes especially challenging in high-dimensional settings
or when latent confounders are present, prompting the development of a diverse
set of methods, including constraint-based~\cite{spirtes2000causation,
zhang2008completeness, colombo2012learning, rohekar2021iterative},
score-based~\cite{tsamardinos2006max, tsirlis2018scoring, chen2021integer,
claassen2022greedy}, and more recently, differentiable
approaches~\cite{zheng2018dags, bello2022dagma, zhang2022truncated, vowels2022d,
bhattacharya2021differentiable}.

Differentiable methods are particularly appealing because they reformulate
causal discovery as a continuous optimization problem, enabling the use of
gradient-based algorithms to learn causal structures directly from data. This is
especially valuable in high-dimensional scenarios, where traditional methods
often falter due to combinatorial explosion and computational bottlenecks.
Despite their promise, differentiable approaches face significant challenges.
Optimizing the entire graph structure, typically represented as a weighted
adjacency matrix, is inherently difficult: the search space grows rapidly with
dimensionality, and the presence of latent confounders further complicates the
objective landscape and constraints. As a result, existing methods often fail to
converge to high-quality solutions within a reasonable timeframe and may end up
with a cyclic graph, which is even not a valid causal structure.

\begin{table}[h]
    \centering
    \caption{List of super-structure-guided optimizations.}
    \label{tab:feature}
    \resizebox{0.65\linewidth}{!}{\begin{tabular}{lcccc}
        \toprule
         & Type & Setting & Latent? & Super-structure\\
        \midrule
        MMHC~\cite{tsamardinos2006max} & Score-based & Faithful & \xmark & Skeleton  \\
        CAM~\cite{buhlmann2014cam} & Score-based & Lin-Gauss & \xmark & Super Skeleton  \\
        GLasso~\cite{ng2021reliable} & Score-based & Lin-Gauss & \xmark & Moralized DAG  \\
SPOT~\cite{ma2024scalable} & Differentiable & Lin-Gauss & \cmark & Skeleton  \\
        SDCD~\cite{nazaret2024stable} & Differentiable & Faithful & \xmark & Preselected Edges  \\
        DCD w/ GLasso & Differentiable & Lin-Gauss & \xmark & Moralized DAG  \\
        \midrule
        DCD w/ \tool & Differentiable & Lin-Gauss & \cmark & Super Structure  \\
        \bottomrule
    \end{tabular}}
\end{table}

Notably, these challenges are not exclusive to differentiable methods;
score-based approaches, which rely on discrete optimization, face similar
scalability issues. A widely adopted strategy in the score-based paradigm is to
constrain the search space using a super-structure: a superset of the true
causal graph that restricts the set of candidate edges~\cite{tsamardinos2006max,
ng2021reliable}.

\begin{figure}[t]
\centering

\resizebox{0.7\linewidth}{!}
{\begin{tikzpicture}[
    node distance=0.9cm,
    var/.style={circle, draw, minimum size=0.65cm, font=\small},
    edge/.style={-{Stealth[length=1.7mm]}, thick},
    biedge/.style={{Stealth[length=1.7mm]}-{Stealth[length=1.7mm]}, thick, vegaOrange},
    pruned/.style={draw=gray!50, dashed, thin},
    kept/.style={draw=vegaBlue, thick},
    annotation/.style={font=\scriptsize, align=center}
]

\begin{scope}[local bounding box=full]
    \node[annotation, font=\small\bfseries] at (0.75,1.6) {From Fully-Connected};

    \node[var] (x1) at (0,0.7) {$X_1$};
    \node[var] (x2) at (1.5,0.7) {$X_2$};
    \node[var] (x3) at (0,-0.7) {$X_3$};
    \node[var] (x4) at (1.5,-0.7) {$X_4$};

    \foreach \a/\b in {x1/x2, x1/x3, x1/x4, x2/x3, x2/x4, x3/x4}
        \draw[pruned] (\a) -- (\b);

    \node[annotation, text=red!70!black] at (0.75,-1.45) {Search space: $\mathcal{O}(d^2)$};
\end{scope}

\draw[-{Stealth[length=2mm]}, thick, gray] (2.3,0) -- (3.1,0);
\node[annotation, above, gray] at (2.7,0.07) {\tool};

\begin{scope}[xshift=3.9cm, local bounding box=super]
    \node[annotation, font=\small\bfseries] at (0.75,1.6) {From Super-Structure};

    \node[var] (y1) at (0,0.7) {$X_1$};
    \node[var] (y2) at (1.5,0.7) {$X_2$};
    \node[var] (y3) at (0,-0.7) {$X_3$};
    \node[var] (y4) at (1.5,-0.7) {$X_4$};

    \draw[kept] (y1) -- (y2);
    \draw[kept] (y1) -- (y3);
    \draw[kept] (y2) -- (y4);
    \draw[kept] (y3) -- (y4);

    \draw[pruned, opacity=0.3] (y1) -- (y4);
    \draw[pruned, opacity=0.3] (y2) -- (y3);

    \node[text=red, font=\tiny] at ($(y1)!0.5!(y4)$) {\ding{55}};
    \node[text=red, font=\tiny] at ($(y2)!0.5!(y3)$) {\ding{55}};

    \node[annotation, text=vegaGreen] at (0.75,-1.45) {Search space: $\mathcal{O}(dk^2)$};
\end{scope}

\draw[-{Stealth[length=2mm]}, thick, gray] (6.2,0) -- (7.0,0);
\node[annotation, above, gray] at (6.6,0.07) {DCD};

\begin{scope}[xshift=7.7cm, local bounding box=mag]
    \node[annotation, font=\small\bfseries] at (0.75,1.6) {Learned MAG};

    \node[var] (z1) at (0,0.7) {$X_1$};
    \node[var] (z2) at (1.5,0.7) {$X_2$};
    \node[var] (z3) at (0,-0.7) {$X_3$};
    \node[var] (z4) at (1.5,-0.7) {$X_4$};

    \draw[edge, vegaBlue] (z1) -- (z2);
    \draw[edge, vegaBlue] (z1) -- (z3);
    \draw[edge, vegaBlue] (z2) -- (z4);

    \draw[biedge] (z3) to[bend left=30] (z4);

\node[annotation, text=vegaOrange] at (0.75,-1.45) {$\leftrightarrow$: latent confounder};

\end{scope}

\end{tikzpicture}}

\caption{Learning a MAG from a fully-connected graph vs. from a super-structure
learned by \tool. DCD can be instantiated using any differentiable causal
discovery method. Here, $d$ is the number of variables and $k$ is the maximum
in-degree of the true causal graph.}
\label{fig:superstructure-comparison}
\end{figure}
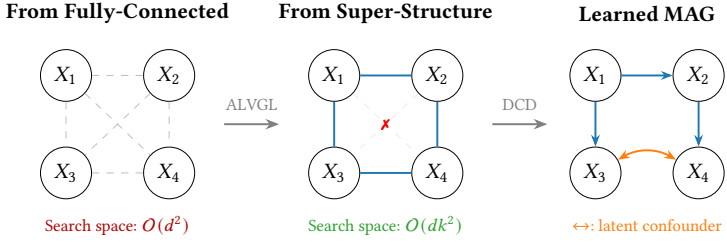 
This idea has recently been extended to differentiable methods, where a
super-structure is used to guide the optimization process~\cite{ma2024scalable,
nazaret2024stable}. However, skeleton-based approaches such as SPOT typically
incur high computational overhead due to the large number of conditional
independence tests required, which limits their scalability in high-dimensional
settings. In contrast, GLasso~\cite{ng2021reliable} provides a more efficient
alternative by learning a moralized DAG using graphical Lasso, offering better
scalability. Similarly, SDCD~\cite{nazaret2024stable} selects edges via a
constraint-free optimization process that, in the linear Gaussian setting,
effectively reduces to graphical Lasso.

These observations prompt a natural question: \emph{Can GLasso-style
super-structure learning be combined with differentiable causal discovery (DCD)
to yield a more efficient and effective optimization pipeline?} We explore this
by applying GLasso as a pre-processing step for NOTEARS~\cite{zheng2018dags},
and observe that, while incurring an 8.2\% drop in F1 score, it achieves a
substantial 78.7\% reduction in runtime. Upon closer examination, the
performance degradation is largely attributable to GLasso's inability to
accurately estimate the precision matrix in high-dimensional regimes (e.g., $d
\geq 100$). Moreover, real-world datasets often contain latent confounders,
further diminishing its utility in guiding scalable differentiable causal
discovery.

These limitations lead us to a more ambitious question: \emph{Can we design a
super-structure learning method that remains robust in high-dimensional settings
with latent confounders, while still being efficient enough to guide
differentiable causal discovery?}

To address this, we propose \tool (\textbf{A}ugmented
\textbf{L}atent-\textbf{V}ariable \textbf{G}raphical \textbf{L}asso), a novel
enhancement to differentiable causal discovery for linear Gaussian data, with or
without latent confounders, by efficiently learning a super-structure to guide
optimization and reduce the search space. \tool employs a sparse+low-rank
decomposition of the precision matrix, inspired by the seminal framework of
Chandrasekaran et al.~\cite{chandrasekaran2012latent}, where the sparse
component captures direct dependencies among observed variables and the low-rank
component absorbs dense correlations from latent confounders or shared variance,
ensuring robustness in high-dimensional settings unlike vanilla Graphical Lasso.
To solve this decomposition efficiently, \tool employs an optimization strategy
based on the Alternating Direction Method of Multipliers (ADMM), ensuring
scalable and stable convergence even in high-dimensional scenarios. Finally,
\tool introduces a robust super-structure learning method that combines
information from sparse and low-rank components through a weighted adjacency
matrix to accurately identify true causal connections while being aware of
associations arising from latent confounding or limited sample sizes. Together,
these optimizations enable \tool to consistently deliver accurate, efficient,
and reliable super-structures for downstream differentiable causal discovery
pipelines.

\begin{figure}
    \centering
    \includegraphics[width=0.85\linewidth]{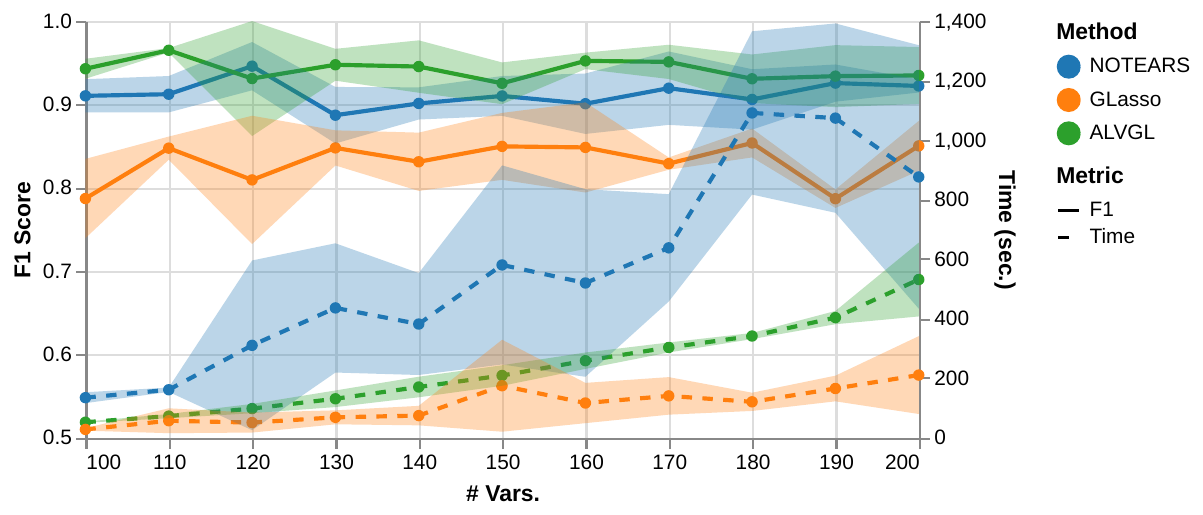}
    \caption{Comparison of NOTEARS, GLasso and \tool on synthetic datasets
    ($n=1000$, degree$=1$, repeat three times).}
    \label{fig:intro:eval}
\end{figure}

\parh{Evaluation Highlights.}~In \F\ref{fig:intro:eval}, we illustrate the
performance advantages of \tool compared to NOTEARS~\cite{zheng2018dags} across
multiple datasets. \tool consistently achieves significant improvements,
boosting the F1 score by an average of 3.3\% across all datasets while
simultaneously reducing runtime by 52.9\%. Notably, applying GLasso to these
same datasets results in failure for 25 out of 33 scenarios due to
ill-conditioned covariance matrices. Although the accuracy gain compared to
NOTEARS appears modest, it is particularly noteworthy in the more challenging
scenario involving latent confounders, where \tool boosts outperforms
ABIC~\cite{bhattacharya2021differentiable}, improving F1 scores by 3.1\% and
runtime by 77.4\%.

In summary, we make the following contributions:
\begin{itemize}
\item We propose a general framework for super-structure-guided differentiable
causal discovery, which significantly improves scalability and reliability in
challenging scenarios involving high-dimensional data and latent confounders.

\item We introduce \tool, a new super-structure learning method, which
decomposes the observed precision matrix into sparse and low-rank components.
This allows it to robustly capture both direct dependencies and latent
confounding effects while maintaining stability in high dimensions. We further
show that the learned super-structure from \tool is theoretically guaranteed to
contain the true causal graph and can be effectively integrated with a variety
of existing differentiable causal discovery methods.

\item Through extensive experiments on synthetic and real-world datasets, we
demonstrate that \tool consistently improves both accuracy and efficiency,
outperforming state-of-the-art baselines across a wide range of settings,
especially in the presence of latent confounders.

\end{itemize}

\parh{Availability.}~We commit to open-sourcing \tool upon acceptance to foster
transparency and facilitate further research. \section{Preliminary}

In accordance with many previous works~\cite{ng2021reliable, tsirlis2018scoring,
chen2021integer, claassen2022greedy, bhattacharya2021differentiable}, we focus
on discovering causal relationships of a linear Gaussian structural causal model
(SCM) with or without latent confounders. In this section, we introduce the
preliminary knowledge of SCM, differentiable causal discovery, and graphical
Lasso.

\subsection{Linear Gaussian SCM}
\label{subsec:linear-scm}

\parh{Latent Confounder-Free SCM.}~We first start with the definition of a
linear Gaussian SCM without latent confounders. Consider a linear SCM with $d$
observable variables parameterized by a coefficient matrix
$\delta\in\mathbb{R}^{d\times d}$. The SCM can be written as
\begin{equation}
    \label{eq:linear-scm}
    X_i\leftarrow \bm{B}_i^T \bm{X} + N_i, \quad i=1,\ldots,d
\end{equation}
where $X_i$ is the $i$-th variable, $\bm{B}_i$ is a vector of coefficients for
the parents of $X_i$ in the DAG representation of the SCM, $\bm{X}$ is a vector
of all variables, and $N_i$ is a noise term that is mutually independent of all
other noise terms. Specifically, a non-zero coefficient $\bm{B}_{ij}$ indicates
that $X_j$ is a parent of $X_i$ in the DAG. \eqref{eq:linear-scm} can be
rewritten in matrix form as
\begin{equation}
    \label{eq:linear-scm-matrix}
    \bm{X} = \bm{B}^T \bm{X} + \bm{N}
\end{equation}
where $\bm{B}=[\bm{B}_1,\ldots,\bm{B}_d]$ corresponds to a weighted adjacency
matrix of the DAG, $\bm{X}=[X_1,\ldots,X_d]^T$ is a vector of all variables, and
$\bm{N}=[N_1,\ldots,N_d]$ is a vector of noise terms. The noise term $\bm{N}$ is
characterized by a multi-variate Gaussian distribution with zero mean and
covariance matrix
$\Omega_{\bm{N}}=\text{cov}(\bm{N})=\text{diag}(\sigma_1^2,\ldots,\sigma_d^2)$,
where $\sigma_i^2$ is the variance of the noise term $N_i$. We assume that
$\sigma_i^2>0$ for all $i=1,\ldots,d$ to ensure positive measure everywhere. The
covariance matrix $\Omega_{\bm{X}}$ of the variables $\bm{X}$ can be expressed
as $\Omega_{\bm{X}}=(I-\bm{B})^{-T}\Omega_{\bm{N}}(I-\bm{B})^{-1}$, where $I$ is
the identity matrix. The joint distribution of $\bm{X}$ is a zero-mean
multivariate Gaussian distribution with covariance matrix $\Omega_{\bm{X}}$.

\parh{Latent Confounder SCM.}~When latent confounders are present, the linear
SCM can be extended to include latent variables $\bm{X}_L$. In this case, the
latent variables can be encoded via the noise term $N_i$ in
\eqref{eq:linear-scm}, which is no longer mutually independent of other noise
terms due to confounding effects. The covariance matrix $\Omega_{\bm{N}}$ of the
noise term $\bm{N}$ is no longer diagonal, and
$\Omega_{\bm{N}_{ij}}=\text{cov}(N_i,N_j)$ is non-zero if and only if $X_i$ and
$X_j$ share a latent confounder. With this extension, the linear SCM on observable
variables $\bm{X}_O$ can still be expressed in the similar form as
\begin{equation}
    \label{eq:linear-scm-latent}
    \bm{X}_O = \bm{B}^T \bm{X}_O + \bm{N}
\end{equation}
And, likewise, the covariance matrix of the observable variables $\bm{X}_O$ can
be expressed as $\Omega_{\bm{X}_O} =(I-\bm{B})^{-T} \Omega_{\bm{N}}
(I-\bm{B})^{-1}$.

\parh{Graphical Representation.}~The SCM defined above can be represented as a
causal graph $G=(\bm{V},\bm{E})$, where $\bm{V}=\{V_1,\ldots,V_d\}$ is a set of
nodes representing the variables $\bm{X}$ (or, in the case of latent
confounders, $\bm{X}_O$), and $\bm{E}$ is a set of directed (and bidirected, if
latent confounders are present) edges representing the causal relationships
between the variables. Under the \textit{structural minimality}
assumption~\cite{peters2017elements}, one can derive the causal graph from the
linear SCM with the following rules: (1) if $\bm{B}_{ij}\neq 0$, then there is a
directed edge $V_i\to V_j$ in the graph; (2) if latent confounders are present,
then there is a bidirected edge $V_i\leftrightarrow V_j$ if
$\Omega_{\bm{N}_{ij}}\neq 0$. Thus, the goal of causal discovery recasts to
inferring $\bm{B}$ and $\Omega_{\bm{N}}$ from the empirical observations of
$\bm{X}$ (or $\bm{X}_O$).

\subsection{Differentiable Causal Discovery}

Score-based methods aim to maximize a score function (e.g., log-likelihood) over
graph structures, subject to the acyclicity constraint. This can be formulated
as:
\begin{equation}
\arg\max_{{G}} f(G) \text{ s.t. } {G} \text{~is acyclic}
\end{equation}
\noindent where $f(\cdot)$ is the score function. Due to the acyclicity
constraint, this optimization is inherently combinatorial. Recently,
differentiable approaches have relaxed this problem by reformulating the
acyclicity constraint into a continuous function --- most notably:
\begin{equation}
\label{eq:dag-constraint}
h_{\text{DAG}}(\bm{B}) = \text{tr}(e^{\bm{B} \circ \bm{B}}) - d
\end{equation}
\noindent where $d = |\bm{V}\text{O}|$ is the number of observed variables, and
$\bm{B}$ is the weighted adjacency matrix of the graph $G$, as defined in
\S~\ref{subsec:linear-scm}. This function satisfies $h_{\text{DAG}}(\bm{B}) = 0
$ if and only if $G$ is acyclic. Several variants of $h_{\text{DAG}}$ have been
proposed to improve numerical stability and computational
efficiency~\cite{yu2019dag, wei2020dags, zheng2020learning, bello2022dagma,
zhang2025analytic}. Typically, these formulations are optimized using augmented
Lagrangian methods (ALM) or the alternating direction method of multipliers
(ADMM), which decompose the constrained problem into a series of unconstrained
subproblems solvable via standard optimizers.

Despite these advances, the optimization remains challenging. The objective
function is often complex, and the acyclicity constraint $h_{\text{DAG}}$
introduces significant non-convexity. In particular, $\bm{B}$ scales
quadratically with the number of variables, and the constraint $h_{\text{DAG}}$
has at least polynomial degree in $\bm{B}$, depending on its specific form.
Consequently, differentiable methods often struggle to converge to meaningful
solutions in large-scale settings.

The scalability issues are exacerbated when latent confounders are present, as
both the objective function and the constraints become more complex. As
aforementioned, the causal graph $G$ requires both $\bm{B}$ and
$\Omega_{\bm{N}}$ to represent the directed and bidirected edges, respectively.
Since bidirected edges are introduced, the graph is represented as an ancestral
directed mixed graph (ADMG)~\cite{richardson2002ancestral}. The corresponding
structural constraint for ADMGs~\cite{bhattacharya2021differentiable} can be
expressed as
\begin{equation}
    \label{eq:admg-constraint}
    h_{\text{ADMG}}(\bm{B},\Omega_{\bm{N}}) = \operatorname{tr}(e^{\bm{B}}) - d +
    \operatorname{sum}(e^{\bm{B}}\circ \Omega_{\bm{N}})
\end{equation}
The function $h_{\text{ADMG}}$ is a generalization of $h_{\text{DAG}}$ to
accommodate the bidirected edges in ADMGs. It has been proved that
$h_{\text{ADMG}}(\bm{B},\Omega_{\bm{N}})=0$ if and only if $G$ is an ancestral
ADMG. And, additional complexity is also introduced in the objective function
for differentiable ADMG learning. We omit the details of the objective function
here as it is independent of our interests in this paper, and refer readers to
the original paper~\cite{bhattacharya2021differentiable}.

\subsection{Graphical Lasso}
\label{subsec:glasso}

Graphical Lasso~\cite{friedman2008sparse} is a widely used method for estimating
the precision matrix (the inverse of the covariance matrix) of a multivariate
Gaussian distribution. We state the problem as follows. Consider a set of
variables $\bm{X} = [X_1,\ldots,X_d]^T$ from a zero-mean multivariate Gaussian
distribution with a covariance matrix $\Omega_{\bm{X}} =
\operatorname{cov}(\bm{X})$. The graphical Lasso aims to estimate the precision
matrix $\Theta$ which is the inverse of covariance matrix $\Omega_{\bm{X}}^{-1}$
by solving the following optimization problem:
\begin{equation}
    \label{eq:glasso}
    \arg\min_{\Theta\succeq 0} -\log\det(\Theta) + \operatorname{tr}(\Omega_{\bm{X}}\Theta) +
    \lambda \|\Theta\|_1
\end{equation}
\noindent where $\lambda$ is a regularization parameter that controls the
sparsity of the estimated precision matrix. The first two terms in the objective
function are the negative log-likelihood of the Gaussian distribution, and the
third term is an $\ell_1$-norm regularization term that encourages sparsity in
the precision matrix $\Theta$. An important property of the precision matrix is
that if $\Theta_{ij}=0$ and $\bm{X}$ is linear Gaussian, then $X_i$ and $X_j$
are conditionally independent given all other variables, i.e., $\bm{X} \setminus
\{X_i, X_j\}$. The undirected graph corresponding to the precision matrix
$\Theta$ is known as the \textit{conditional independence
graph}~\cite{loh2014high}, which is \textit{the moralized DAG} for the setting
without latent confounders. Since \textit{the moralized DAG} is a super graph of
the true causal graph, graphical Lasso can be used as a prescreening step to
reduce the search space for causal discovery
methods~\cite{loh2014high,buhlmann2014cam,ng2021reliable}.

While graphical lasso has proven effective in many applications, it faces
several limitations when applied to high-dimensional data and complex dependency
structures. First, the method assumes that all conditional dependencies can be
captured through direct pairwise relationships in the precision matrix, failing
to account for \textit{latent confounders} that may induce spurious correlations
between observed variables~\cite{chandrasekaran2012latent}. Second, graphical
lasso relies on a SPD covariance matrix $\Omega_{\bm{X}}$ to ensure the
resulting optimization problem is well-posed. However, under high-dimensional
settings where the sample size $n$ is not significantly larger than the number
of variables $d$ (i.e., $n \not\gg d$), or when the underlying graph has dense
structures or subtle noises (i.e., nearly linearly dependent variables), the
resulting matrix may become rank-deficient during optimizations, leading to
ill-conditioned problems. 

\begin{figure*}
    \centering
    \includegraphics[width=0.99\linewidth]{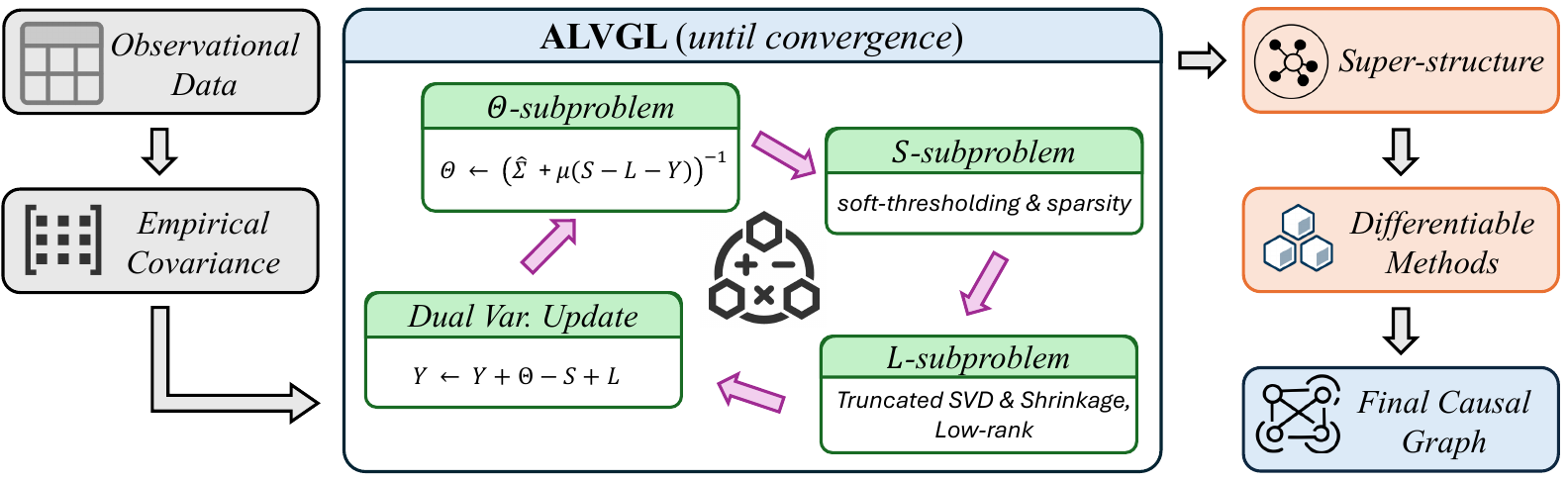}
    \caption{Overview}
    \label{fig:intro:overview}
\end{figure*}

\section{\tool}
\label{sec:dlvgl}

Inspired by the success of super-structure-guided optimization in score-based
methods, we are motivated to improve the limitations of graphical Lasso and
learn a super-structure that can better guide the optimization of differentiable
causal discovery methods. We start with formulating the problem setting and then
present our approach.

\subsection{Problem Setting}
\label{subsec:setting}

We consider the task of causal structure discovery in the presence of latent
confounders under the linear Gaussian assumption. Let $\bm{X}_O = [X_1, \dots,
X_d]^T \in \mathbb{R}^{d \times n}$ denote a dataset of $d$ observed variables
with $n$ samples, where each $X_i$ is a real-valued random variable, and the
observations are generated from a linear Gaussian SCM that may include latent
variables. The goal is to learn a \emph{super-structure} --- an undirected graph
over $\bm{X}_O$ that contains all true edges of the underlying causal graph ---
to restrict and guide the search space for downstream causal discovery
algorithms.

We assume the data follows a linear SCM with additive Gaussian noise:
\begin{equation}
    \bm{X} = \bm{B}^T \bm{X} + \bm{N}
\end{equation}
where $\bm{X} \in \mathbb{R}^{(d+\ell) \times n}$ consists of both observed
variables $\bm{X}_O$ and latent variables $\bm{X}_L$, $\bm{B}$ is a coefficient
matrix encoding the causal relationships (possibly including connections to
latent variables), and $\bm{N} \sim \mathcal{N}(0, \Omega_{\bm{N}})$ is a
zero-mean Gaussian noise vector. We let $\Sigma_O = \text{cov}(\bm{X}_O)$ denote
the marginal covariance matrix over observed variables, and our aim is to
recover structural information about the true causal graph using only access to
$\bm{X}_O$. Specifically, when $l=0$, the true causal graph is a DAG over
$\bm{X}_O$; when $l>0$, the true causal graph is a maximal ancestral graph (MAG)
over $\bm{X}_O$ that may include both directed and bidirected edges due to
latent confounding~\cite{richardson2002ancestral}.

In the following, we make the standard assumptions commonly adopted in the
causal discovery literature~\cite{bhattacharya2021differentiable,
peters2017elements} and forms the basis of our theoretical analysis.

\begin{enumerate}[label=(A\arabic*)]
  \item \textbf{Linear Gaussian SCM with latent variables.} The joint
  distribution of $(\bm{X}_O,\bm{X}_L)$ is generated by a linear Gaussian SCM of
  the form above whose causal structure is represented by a directed acyclic
  graph (DAG) over observed and latent variables, possibly inducing a mixed
  graph over $\bm{X}_O$ after marginalizing out latents.

  \item \textbf{Faithfulness and structural minimality.} The joint distribution
  is faithful and structurally minimal with respect to the underlying causal
  graph~\cite{peters2017elements}. In particular, for DAGs and mixed graphs,
  missing edges (or graphical separations in general) correspond to conditional
  independencies, and there is no exact cancellation of parameters that would
  mask an existing edge (i.e., by manifesting a detectable statistical
  dependence \textit{w.r.t} a threshold $\tau$).
\end{enumerate}

\subsection{On the Implication of Latent Variables}
\label{subsec:latent}

A major challenge in causal discovery with latent variables is that
marginalizing out these unobserved variables distorts the conditional
independence structure among the observed ones. In particular, the marginal
precision matrix $\Theta_O = \Sigma_O^{-1}$, where $\Sigma_O$ is the covariance
matrix over the observed variables $\bm{X}_O$, no longer reflects a sparse
structure, even when the true causal graph is sparse. This violates the core
assumption of methods like graphical Lasso, which rely on the sparsity of
$\Theta_O$ to recover meaningful structural information.

To understand the issue, we distinguish between two ways of interpreting the
precision matrix. On one hand, $\Theta_O$ encodes the conditional independencies
among observed variables given only other observed variables, since it is
computed as the inverse of the marginal covariance $\Sigma_O$. On the other
hand, the correct conditional independence structure that aligns with the full
causal model, including latent variables, is represented by the submatrix
$\tilde{\Theta}_O = \Theta_{O+L}[O,O]$ of the full precision matrix
$\Theta_{O+L} = \Sigma_{O+L}^{-1}$. In this case, $\tilde{\Theta}_O$ reflects
conditional independencies among observables given all remaining observed and
latent variables. The two matrices differ: while $\Theta_{O_{ij}} = 0$ implies
$X_i$ and $X_j$ are independent conditioned only on other observables,
$\tilde{\Theta}_{O_{ij}} = 0$ implies independence even after accounting for
latent confounding.

In the context of causal discovery, one might hope that $\Theta_O$ is sufficient
to determine the super-structure, since an edge in the true causal graph implies
that $X_i$ and $X_j$ are not conditionally independent given any subset of the
other observed variables. This would suggest that directly applying graphical
Lasso to estimate $\Theta_O$ from $\Sigma_O$ could suffice. However, this
intuition is misleading. Marginalizing over latent variables causes their
influence to spread across the observed space, and the resulting precision
matrix $\Theta_O$ becomes dense, with indirect effects falsely appearing as
direct ones. The sparsity assumption on $\Theta_O$ no longer holds, making
graphical Lasso inapplicable in this setting~\cite{chandrasekaran2012latent}.
Therefore, it is crucial to account for latent confounding explicitly when
estimating the precision matrix. A better alternative is to model the observed
conditional independencies via a decomposition of the full precision structure
into a sparse component, capturing direct effects among observables, and a
low-rank component, capturing dependencies induced by latent variables. We turn
to this approach in the next section.

\subsection{Sparse+Low-Rank Decomposition}
\label{subsec:splr}

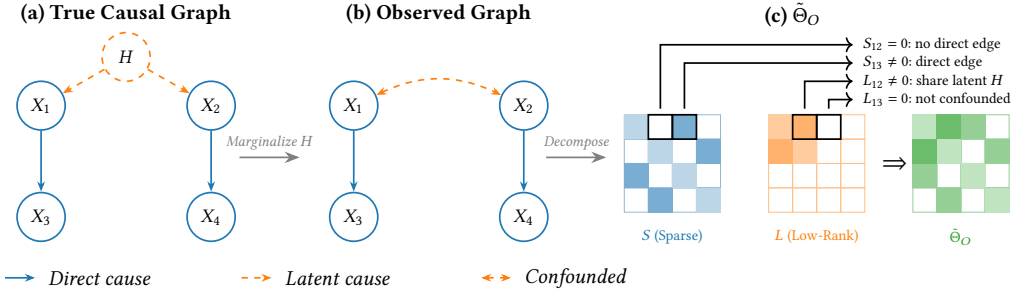
\begin{figure}[t]
\centering
\resizebox{0.98\linewidth}{!}{
\begin{tikzpicture}[
    node distance=1.2cm,
    obs/.style={circle, draw=vegaBlue, thick, minimum size=0.8cm, font=\small},
    latent/.style={circle, draw=vegaOrange, thick, dashed, minimum size=0.8cm, font=\small},
    direct/.style={-{Stealth[length=1.7mm]}, thick, vegaBlue},
    confound/.style={{Stealth[length=1.7mm]}-{Stealth[length=1.7mm]}, thick, vegaOrange, dashed},
    latedge/.style={-{Stealth[length=1.7mm]}, thick, vegaOrange, dashed},
    annotation/.style={font=\scriptsize, align=center},
    cellbox/.style={draw, thick, minimum width=0.5cm, minimum height=0.5cm, font=\scriptsize},
    explain/.style={font=\scriptsize, align=left}
]

\begin{scope}[local bounding box=panelA]

\coordinate (A) at (0,0);

\node[obs] (X1) at (A) {$X_1$};
    \node[obs, right=2cm of X1] (X2) {$X_2$};
    \node[obs, below=1cm of X1] (X3) {$X_3$};
    \node[obs, below=1cm of X2] (X4) {$X_4$};

\node[latent] (L) at ($(X1)!0.5!(X2)+(0,0.8cm)$) {$H$};

\node[annotation, font=\bfseries] at ($(X1)!0.5!(X2)+(0,1.5cm)$) {(a) True Causal Graph};

\draw[direct] (X1) -- (X3);
    \draw[direct] (X2) -- (X4);

\draw[latedge] (L) -- (X1);
    \draw[latedge] (L) -- (X2);

\node[annotation, anchor=west] (leg1) at ($(X3)+(0,-1.0cm)$) {\it\normalsize Direct cause};
    \draw[direct] ($(leg1.west)+(-0.6cm,0)$) -- ($(leg1.west)+(-0.1cm,0)$);

    \node[annotation, anchor=west, right=2.0cm of leg1] (leg2) {\it\normalsize Latent cause};
    \draw[latedge] ($(leg2.west)+(-0.6cm,0)$) -- ($(leg2.west)+(-0.1cm,0)$);

    \node[annotation, anchor=west, right=2.0cm of leg2] (leg3) {\it\normalsize Confounded};
    \draw[confound] ($(leg3.west)+(-0.6cm,0)$) -- ($(leg3.west)+(-0.1cm,0)$);

\end{scope}

\draw[-{Stealth[length=2.5mm]}, thick, gray] (3.3, -0.85) -- (4.3, -0.85);
\node[annotation, above, gray] at (3.8, -0.85) {\it Marginalize $H$};

\begin{scope}[xshift=5.2cm, local bounding box=panelB]

\coordinate (B) at (0,0);

\node[obs] (Y1) at (B) {$X_1$};
    \node[obs, right=2cm of Y1] (Y2) {$X_2$};
    \node[obs, below=1cm of Y1] (Y3) {$X_3$};
    \node[obs, below=1cm of Y2] (Y4) {$X_4$};

\node[annotation, font=\bfseries] (Blabel)
        at ($(Y1)!0.5!(Y2)+(0,1.5cm)$) {(b) Observed Graph};

\draw[direct] (Y1) -- (Y3);
    \draw[direct] (Y2) -- (Y4);

\draw[confound] (Y1) to[bend left=25] (Y2);

\end{scope}

\draw[-{Stealth[length=2.5mm]}, thick, gray] (8.4, -0.85) -- (9.4, -0.85);
\node[annotation, above, gray] at (8.9, -0.85) {\it Decompose};

\begin{scope}[xshift=10.5cm, local bounding box=panelC]
\coordinate (C) at (0,0);
    \node[annotation, font=\bfseries]
    at ($(C |- Blabel)+(2.0cm,0)$) {(c) $\tilde{\Theta}_O$};

\coordinate (Sbase) at ($(C) + (-0.8,-0.15)$);
    \foreach \i in {0,1,2,3}{
        \foreach \j in {0,1,2,3}{
            \draw[vegaBlue!50]
            ($(Sbase)+(\j*0.4, -\i*0.4)$)
            rectangle
            ($(Sbase)+(+0.4+\j*0.4, -0.4-\i*0.4)$);
        }
    }

\fill[vegaBlue!30] (-0.8,-0.15) rectangle (-0.4,-0.55);
    \fill[vegaBlue!30] (-0.4,-0.55) rectangle (0,-0.95);
    \fill[vegaBlue!30] (0,-0.95) rectangle (0.4,-1.35);
    \fill[vegaBlue!30] (0.4,-1.35) rectangle (0.8,-1.75);

    \fill[vegaBlue!60] (-0.8,-0.95) rectangle (-0.4,-1.35);
    \fill[vegaBlue!60] (0,-0.15) rectangle (0.4,-0.55);
    \fill[vegaBlue!60] (-0.4,-1.35) rectangle (0,-1.75);
    \fill[vegaBlue!60] (0.4,-0.55) rectangle (0.8,-0.95);

\draw[black, thick] (-0.4,-0.15) rectangle (0,-0.55);
    \draw[black, thick, ->] (-0.2,-0.05) -- (-0.2,1.0) -- (3,1.0);
    \node[explain,anchor=west] at (3.05,1.0) {$S_{12}=0$: no direct edge};

    \draw[black, thick] (0,-0.15) rectangle (0.4,-0.55);
    \draw[black, thick, ->] (0.2,-0.05) -- (0.2,0.70) -- (3,0.70);
    \node[explain,anchor=west] at (3.05,0.70) {$S_{13}\neq0$: direct edge};

    \node[annotation,text=vegaBlue] at (0,-2.15) {$S$ (Sparse)};

\coordinate (Lbase) at ($(C)+(1.6,-0.15)$);
    \foreach \i in {0,1,2,3}{
        \foreach \j in {0,1,2,3}{
            \draw[vegaOrange!50] ($(Lbase)+(\j*0.4, -\i*0.4)$)
            rectangle
            ($(Lbase)+(+0.4+\j*0.4, -0.4-\i*0.4)$);
        }
    }
    \begin{scope}[yshift=-0.75cm]
    \fill[vegaOrange!40] (1.6,0.6) rectangle (2.0,0.2);
    \fill[vegaOrange!40] (2.0,0.2) rectangle (2.4,-0.2);
    \fill[vegaOrange!60] (2.0,0.6) rectangle (2.4,0.2);
    \fill[vegaOrange!60] (1.6,0.2) rectangle (2.0,-0.2);

\draw[black, thick] (2.4,0.6) rectangle (2.8,0.2);
    \draw[black, thick, ->] (2.6,0.7) -- (2.6,0.85) -- (3,0.85);
    \node[explain,anchor=west] at (3.05,0.85)
        {$L_{13}=0$: not confounded};

\draw[black, thick] (2.0,0.6) rectangle (2.4,0.2);
    \draw[black, thick, ->] (2.2,0.7) -- (2.2,1.15) -- (3,1.15);
    \node[explain,anchor=west] at (3.05,1.15)
        {$L_{12}\neq0$: share latent $H$};

    \node[annotation,text=vegaOrange] at (2.4,-1.4) {$L$ (Low-Rank)};
    \node[font=\Large] at (3.7,-0.2) {$\Rightarrow$};
    \end{scope}

\coordinate (Thetabase) at ($(C)+(4.0,-0.15)$);
    \foreach \i in {0,1,2,3}{
        \foreach \j in {0,1,2,3}{
            \draw[vegaGreen!50] ($(Thetabase)+(\j*0.4, -\i*0.4)$)
                 rectangle ($(Thetabase)+(0.4+\j*0.4, -0.4-\i*0.4)$);
        }
    }
    \begin{scope}[yshift=-0.75cm]
    \fill[vegaGreen!30] (4.0,0.6) rectangle (4.4,0.2);
    \fill[vegaGreen!30] (4.4,0.2) rectangle (4.8,-0.2);
    \fill[vegaGreen!30] (4.8,-0.2) rectangle (5.2,-0.6);
    \fill[vegaGreen!30] (5.2,-0.6) rectangle (5.6,-1.0);
    \fill[vegaGreen!50] (4.0,-0.2) rectangle (4.4,-0.6);
    \fill[vegaGreen!50] (4.8,0.6) rectangle (5.2,0.2);
    \fill[vegaGreen!50] (4.4,-0.6) rectangle (4.8,-1.0);
    \fill[vegaGreen!50] (5.2,0.2) rectangle (5.6,-0.2);
    \fill[vegaGreen!70] (4.4,0.6) rectangle (4.8,0.2);
    \fill[vegaGreen!70] (4.0,0.2) rectangle (4.4,-0.2);

    \node[annotation,text=vegaGreen] at (4.8,-1.4) {$\tilde{\Theta}_O$};
    \end{scope}
\end{scope}

\end{tikzpicture}
} 

\caption{Illustration of sparse+low-rank decomposition. (a) The true causal
graph includes a latent confounder $H$ affecting $X_1$ and $X_2$, with direct
edges $X_1 \to X_3$ and $X_2 \to X_4$. (b) After marginalizing $H$, its effect
appears as a bidirected edge $X_1 \leftrightarrow X_2$. (c) The precision matrix
$\tilde{\Theta}_O$ decomposes into sparse $S$ and low-rank $L$. $S_{ij} \neq 0$
indicates a direct causal relationship between $X_i$ and $X_j$; $L_{ij} \neq 0$
indicates that $X_i$ and $X_j$ are jointly influenced by shared latent
variables; $\tilde{\Theta}_{O,ij}$ combines both effects to represent the full
conditional dependency structure.}
\label{fig:splr-decomposition}
\end{figure} 
\F~\ref{fig:splr-decomposition} provides an intuitive illustration of how a
precision matrix with latent confounding effects naturally separates into a
\emph{sparse} component $S$ capturing direct causal relationships among observed
variables and a \emph{low-rank} component $L$ capturing shared variation induced
by latent factors. Panel~(a) shows the true causal graph with a latent variable
$H$ influencing $X_1$ and $X_2$. After marginalizing $H$, its effect appears in
panel~(b) as a bidirected edge $X_1 \leftrightarrow X_2$, which is a hallmark of
latent confounding. Panel~(c) then visualizes the decomposition
$\tilde{\Theta}_O = S - L$, where annotated matrix cells illustrate that nonzero
entries in $S$ (blue blocks) correspond to direct causal relations (e.g.,
$S_{13} \neq 0$), while nonzero entries in $L$ (orange blocks) correspond to
latent-induced correlations (e.g., $L_{12} \neq 0$ because $X_1$ and $X_2$ share
$H$), and the final precision $\tilde{\Theta}_O$ reflects their combined
effects.

\medskip
Recall that the observed precision matrix $\tilde{\Theta}_O$ consists of
\ding{192} $\Theta_O$, the conditional dependencies among observed variables, and
\ding{193} a low-rank component encoding the conditional dependencies induced by
latent variables. Formally, following \citet{chandrasekaran2012latent}, the
precision matrix after marginalizing latent variables can be expressed via the
Schur complement as:
\begin{equation}
    \tilde{\Theta}_O
    = \underbrace{\Theta_O}_{S}
    - \underbrace{\Theta_{OL} \Theta_L^{-1} \Theta_{LO}}_{L},
\label{eq:schur}
\end{equation}
where $\Theta_{OL}$ is the cross-precision between observed and latent
variables, and $\Theta_L$ is the precision of the latent variables.

\paragraph{Connection to \F~\ref{fig:splr-decomposition}.} In panel~(a), the
latent variable $H$ induces correlations among $X_1$ and $X_2$. This translates
algebraically into the low-rank term $L = \Theta_{OL} \Theta_L^{-1} \Theta_{LO}$
in \eqref{eq:schur}, which appears in panel~(c) as highlighted orange cells such
as $L_{12} \neq 0$. Conversely, direct causal edges such as $X_1 \to X_3$
manifest as nonzero entries in $S$, e.g., $S_{13} \neq 0$ (blue cells). The
decomposition thus separates the effects of direct causation and latent
confounding into interpretable structural components.

To estimate the decomposition, \citet{chandrasekaran2012latent} proposed the
convex program:
\begin{align}
\min_{S-L\succ 0,\, L \succeq 0}\;
\mathcal{L}(S, L)
&=
- \log\det(S - L)
+ \operatorname{tr}[(S - L)\hat{\Sigma}]
\nonumber\\
&\quad
+ \lambda_s \|S\|_1 + \lambda_l \|L\|_*,
\label{eq:objective}
\end{align}
where $\|S\|_1$ enforces sparsity to recover direct dependencies (blue blocks in
panel~(c)), and $\|L\|_*$, the nuclear norm, enforces low-rank structure
matching the shared latent effects (orange blocks in panel~(c)).
The resulting estimate of $\Theta_O$ is thus recovered as $S$, while $L$
provides an interpretable summary of latent confounding patterns. As shown in
panel~(c), $S$ and $L$ jointly reconstruct $\tilde{\Theta}_O$ (green matrix),
demonstrating how direct and latent-induced dependencies combine.

When no latent confounders exist, $L$ is expected to be zero and
$\tilde{\Theta}_O = \Theta_O$. Even in this setting, the sparse+low-rank model
is empirically more robust than graphical Lasso, particularly when $n$ is not
significantly larger than $d$. The low-rank component naturally absorbs dense,
spurious correlations caused by small sample sizes or model misspecification,
preventing contamination of the sparse structure in $S$. This leads to improved
conditioning and more stable convergence of the precision estimation problem.

\subsection{ADMM Optimization}
\label{subsec:optim}

To solve the optimization problem stated in \eqref{eq:objective}, we first
present a reformulation of the problem that allows us to use the Alternating
Direction Method of Multipliers (ADMM)~\cite{boyd2011distributed}. We introduce
an auxiliary variable $\Theta = S - L$, which leads to the following equivalent
formulation:
\begin{align}
\min_{\Theta \succ 0,\ L \succeq 0,\ S} & -\log\det(\Theta) + \operatorname{tr}(\Theta\hat{\Sigma}) + \lambda_s \|S\|_1 + \lambda_l \|L\|_*\\
\text{subject to } &\Theta = S - L
\label{eq:reformulated-objective}
\end{align}
The equality constraint $\Theta = S - L$ is where ADMM comes in. We have the
following augmented Lagrangian function:
\begin{align}
\mathcal{L}_\mu(\Theta, S, L, Y)
= & -\log\det(\Theta) + \operatorname{tr}(\Theta \hat{\Sigma})+ \lambda_s \|S\|_1\\
&  + \lambda_l \|L\|_* + \frac{\mu}{2} \| \Theta - S + L + Y \|_F^2.
\label{eq:augmented-lagrangian}
\end{align}
where the residual $R \coloneq \Theta - (S - L)$ is captured by the dual
variable $Y$. We have the subproblem updates as follows
\begin{equation}  
\begin{aligned}
\Theta^{k+1}&=\arg\min_{\Theta\succ0}
   -\!\log\det\Theta+\operatorname{tr}(\Theta\hat\Sigma)
   +\tfrac{\mu}{2}\!\left\|\Theta-S^k+L^k+Y^k\right\|_F^{2},\\
S^{k+1}&=\arg\min_S
   \lambda_s\|S\|_1+\tfrac{\mu}{2}\!\left\|\Theta^{k+1}-S+L^k+Y^k\right\|_F^{2},\\
L^{k+1}&=\arg\min_{L\succeq0}
   \lambda_l\|L\|_{*}+\tfrac{\mu}{2}\!\left\|\Theta^{k+1}-S^{k+1}+L+Y^k\right\|_F^{2},\\
Y^{k+1}&=Y^{k}+\Theta^{k+1}-S^{k+1}+L^{k+1}.
\end{aligned}
\end{equation}
By solving these subproblems iteratively, we can obtain the estimates of $S$ and
$L$. We present the operational procedure in \A~\ref{alg:admm}.

\begin{algorithm}[t]
\caption{\tool-ADMM}
\label{alg:admm}
\KwIn{$\hat{\Sigma}$: empirical covariance; $\lambda_s, \lambda_l$: regularization; $\tau$: threshold; $T$: max iterations}
Initialize $(S, L, Y, \mu) \leftarrow (\hat{\Sigma}^{-1}, 0, 0, 1)$\;
\For{$t = 1, \ldots, T$ \textbf{or until converged}}{
    $\Theta \gets \left(\hat{\Sigma} + \mu(S - L - Y)\right)^{-1}$ \tcp*[r]{solve \(\Theta\)-subproblem}
    
    $S \gets \operatorname{soft}_{\frac{\lambda_s}{\mu}}\left(\Theta + L + Y\right)$ \tcp*[r]{solve \(S\)-subproblem}

    $\lambda_{\max} \gets \max(\text{eig}(\hat{\Sigma}))$\;
    $r^\star \gets \sum_{i=1}^{d} \mathbb{I}\left( \lambda_i > \tau \cdot \lambda_{\max} \right)$\;

    $[U, \bm{s}, V^\top] \gets \text{SVD}(\Theta - S + Y)$\; 
    $\bm{s} \gets \operatorname{shrink}(\bm{s}, \frac{\lambda_l}{\mu}, r^\star)$\;
    $L \gets U \cdot \operatorname{diag}(\bm{s})\cdot V^\top$ \tcp*[r]{solve \(L\)-subproblem}

    $Y \gets Y + \Theta - S + L$ \tcp*[r]{dual variable update}
    
    Update $\mu$\;
}
\KwRet{$S$, $L$}
\end{algorithm}

In line 1, \A~\ref{alg:admm} initializes the sparse matrix $S$ as the inverse of
the empirical covariance $\hat{\Sigma}$ (or a regularized version if
$\hat{\Sigma}$ is singular), the low-rank matrix $L$ and the scaled dual
variable $Y$ as zero matrices, and the penalty parameter $\mu$ to an initial
value. It then proceeds through a series of iterative updates for $\Theta$, $S$,
$L$, and $Y$:
\begin{enumerate}
    \item \textbf{$\Theta$-update (line 3):} The precision matrix $\Theta^{k+1}$
    is updated by computing the inverse of the effective covariance matrix
    $(\hat{\Sigma} + \mu^{(k)}(L^{(k)} - Y^{(k)}))$, which is an inexact yet
    computationally cheap ADMM update.

    \item \textbf{$S$-update (line 4):} The sparse component $S$ is updated to
    encourage sparsity while fitting the data, which is done by applying the
    element-wise soft-thresholding operator $\operatorname{soft}_{\alpha}$ to
    $\Theta^{(k+1)}$. Here, $\operatorname{soft}_{\alpha}$ is defined as:
    \begin{equation}
        \operatorname{soft}_{\alpha}(S)_{ij} = \operatorname{sign}(S_{ij})
    \max(|S_{ij}| - \alpha, 0)        
    \end{equation}
    This operation shrinks the elements of
    $\Theta^{(k+1)}$ towards zero, promoting sparsity in the resulting matrix
    $S^{(k+1)}$. The threshold $\alpha$ is set to $\frac{\lambda_s}{\mu^{(k)}}$,
    where $\lambda_s$ is the regularization parameter for sparsity and
    $\mu^{(k)}$ is the penalty parameter at iteration $k$.
    \item \textbf{$L$-update (line 5--9):} The textbook ADMM step for the
    $L$-update requires a full SVD of the residual matrix with soft-thresholding
    to ensure positive semidefiniteness. Recall that $L$ is expected to be
    low-rank in the problem setting, so we pre-estimate a plausible latent
    dimension $r^\star\ll d$ and apply a $\operatorname{shrink}$ operator to
    implement truncated singular value thresholding where only the top $r^\star$
    singular values are retained.\footnote{Computing $r^\star$ is an one-time
    effort and we place it inside the loop for better readability.} Finally, we
    reconstruct the low-rank component $L^{(k+1)}$ using the truncated singular
    values such that the positive semidefiniteness is enforced. Compared to the
    exact solution, this approximation is computationally efficient and
    numerically more robust.
    \item \textbf{$Y$-update (line 10):} The scaled dual variable $Y$ is updated
    using the residual and enforces the equality constraint $\Theta = S - L$
    over the iterations.
\end{enumerate}
These steps are repeated until a convergence criterion is met, or a maximum
number of iterations $T$ is reached. The algorithm then returns the estimated
$S$ and $L$.
\begin{remark}
    \textit{Despite several approximations are introduced in the ADMM updates,
    by the inexact ADMM theory~\cite{boyd2011distributed, eckstein1992douglas},
    \A~\ref{alg:admm} can still converge under some mild conditions.
    Empirically, we find it works better on our task compared to the standard
    ADMM-based latent-variable graphical Lasso
    algorithm~\cite{ma2013alternating}.}
\end{remark}

\noindent\textbf{\tool vs. Classical LVGL.}  Unlike
LVGL~\cite{chandrasekaran2012latent}, which optimizes the decomposition
exclusively for precision estimation, \tool modifies both the penalty structure
and the decomposition objective to maximize support recovery quality of $S$ for
super-structure construction. This design avoids the common leakage of true
edges into the low-rank component, a failure mode that severely limits LVGL when
used for structural tasks, as we will demonstrate in \S~\ref{subsec:rq2}.

\subsection{Super-Structure Learning}
\label{subsec:ssl}

\begin{figure*}[h]
\centering
\resizebox{\textwidth}{!}{\begin{tikzpicture}[
    node distance=1cm and 1.2cm,
    font=\sffamily\small,
var/.style={circle, draw, minimum size=0.5cm, fill=white, thick},
    latent/.style={circle, draw, minimum size=0.5cm, dashed, fill=gray!10},
    arrow/.style={-{Stealth[length=3mm, width=2mm]}, ultra thick, color=gray!80},
    section_label/.style={font=\bfseries, align=center},
    sub_label/.style={font=\footnotesize, align=center, text=gray},
    annotation/.style={font=\scriptsize, align=center},
]
\begin{scope}[local bounding box=panelA]
\node[section_label] (step1_label) {(a) Estimate $S,L$ via \A~\ref{alg:admm}};
    \node[below=0.1cm of step1_label] (SLcontainer) {
    \begin{tikzpicture}[baseline]
\node (SparsePic) {
            \begin{tikzpicture}
                \foreach \i in {0,1,2,3}{
                    \foreach \j in {0,1,2,3}{
                        \draw[vegaBlue!20] (\j*0.4, -\i*0.4) rectangle +(0.4,0.4);
                    }
                }
                \foreach \k in {0,1,2,3}{
                    \fill[vegaBlue!60] (\k*0.4, -\k*0.4) rectangle +(0.4,0.4);
                }
                \fill[vegaBlue!40] (0.4,0.4) rectangle +(0.4,-0.4);
                \fill[vegaBlue!40] (0,-0) rectangle +(0.4,-0.4);
                \fill[vegaBlue!40] (0.4,-0.8) rectangle +(0.4,-0.4);
                \fill[vegaBlue!40] (1.2,0) rectangle +(0.4,-0.4);
            \end{tikzpicture}
        };

        \node[below=0.05cm of SparsePic, font=\footnotesize, text=vegaBlue] (SparseLabel) {Sparse $S$};

\node[right=0.6cm of SparsePic] (LowRankPic) {
            \begin{tikzpicture}
                \foreach \i in {0,1,2,3}{
                    \foreach \j in {0,1,2,3}{
                        \draw[vegaOrange!20] (\j*0.4, -\i*0.4) rectangle +(0.4,0.4);
                    }
                }
                \fill[vegaOrange!60] (0.4,-0) rectangle +(0.4,-0.4);
                \fill[vegaOrange!60] (0.8,-0.4) rectangle +(0.4,-0.4);
                \fill[vegaOrange!60] (1.2,-0.8) rectangle +(0.4,-0.4);

                \fill[vegaOrange!50] (0.8,-0) rectangle +(0.4,-0.4);
                \fill[vegaOrange!50] (0.4,-0.4) rectangle +(0.4,-0.4);

                \fill[vegaOrange!50] (1.2,-0.4) rectangle +(0.4,-0.4);
                \fill[vegaOrange!50] (0.8,-0.8) rectangle +(0.4,-0.4);
            \end{tikzpicture}
        };

        \node[below=0.05cm of LowRankPic, font=\footnotesize, text=vegaOrange] (LowRankLabel) {Low-Rank $L$};

    \end{tikzpicture}
    };
    \end{scope}

\draw[-{Stealth[length=2.5mm]}, thick, gray] (2.3, -1.75) -- (3.9, -1.75);
    \node[annotation, above, gray] at (3.1, -1.75) {\textit{Construct}\\\textit{Super-Structure}};

\begin{scope}[local bounding box=panelB, xshift=6.0cm]
    \node[annotation, font=\bfseries] at (0,0) {(b) Learned Super-Structure};

    \node (SuperGraph) at (0, -1.3cm) {
        \begin{tikzpicture}[scale=0.9]
            \node[var] (x1) at (0, 1.2) {\tiny $X_1$};
            \node[var] (x2) at (1.5, 1.2) {\tiny $X_2$};
            \node[var] (x3) at (0, 0) {\tiny $X_3$};
            \node[var] (x4) at (1.5, 0) {\tiny $X_4$};
            
\draw[thick, vegaBlue] (x1) -- (x2) node[midway, above, font=\tiny] {$S_{12}$};
            \draw[thick, vegaBlue] (x2) -- (x4) node[midway, right, font=\tiny] {$S_{24}$};
            
\draw[thick, vegaOrange, dashed] (x3) -- (x4);
            \draw[thick, vegaOrange, dashed] (x2) -- (x3);
            
\node[latent, scale=0.8, opacity=0.5] (L) at (0.75, -0.4) {$H$};
        \end{tikzpicture}
    };

\node[below=0.05cm of SuperGraph, align=left, font=\scriptsize, fill=white, draw=gray!20, rounded corners, inner sep=3pt] (Legend) {
        \textcolor{vegaBlue}{\textbf{---}} Direct (from $S$)
        \textcolor{vegaOrange}{\textbf{- -}} Confounding (from $L$)
    };

    \end{scope}

\draw[-{Stealth[length=2.5mm]}, thick, gray] (8.2, -1.75) -- (9.8, -1.75);
    \node[annotation, above, gray] at (9, -1.75) {\textit{Initialize \&}\\\textit{Constrain Search}};

\begin{scope}[local bounding box=panelC, xshift=11.6cm]
    \node[annotation, font=\bfseries] at (0,0) {(c) Final MAG (via DCD)};
    \node (FinalGraph) at (0, -1.3cm) {
        \begin{tikzpicture}[scale=0.9]
            \node[var, fill=vegaGreen!10] (f1) at (0, 1.2) {\tiny $X_1$};
            \node[var, fill=vegaGreen!10] (f2) at (1.5, 1.2) {\tiny $X_2$};
            \node[var, fill=vegaGreen!10] (f3) at (0, 0) {\tiny $X_3$};
            \node[var, fill=vegaGreen!10] (f4) at (1.5, 0) {\tiny $X_4$};

\draw[-{Stealth}, thick] (f1) -- (f2);
            \draw[-{Stealth}, thick] (f2) -- (f4);

\draw[{Stealth}-{Stealth}, thick, red] (f3) -- (f4);
        \end{tikzpicture}
    };
    \node[below=0.05cm of FinalGraph, font=\footnotesize, text=gray] {Optimized Causal Graph};
    \end{scope}
\end{tikzpicture}} \caption{Workflow of \tool. \textbf{(a)} The precision matrix
$\Theta$ is decomposed into Sparse ($S$) and Low-Rank ($L$) components.
\textbf{(b)} These components are combined to form a Super-Structure containing
both direct and latent associations. \textbf{(c)} This structure constrains the
search space for Differentiable Causal Discovery (DCD) to efficiently output the
final causal graph (MAG in this case).}
\label{fig:alvgl_reorganized}
\end{figure*}
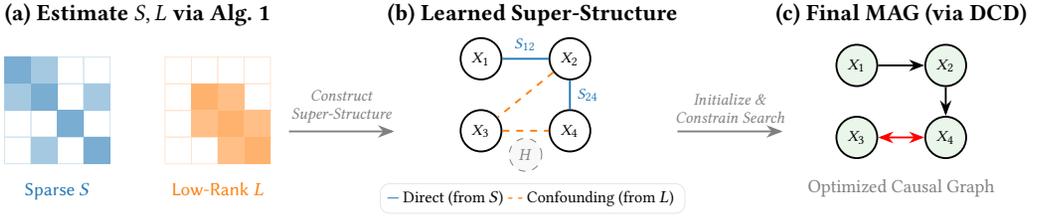

In general, one might expect that the sparse component $S$ (i.e., $\Theta_O$ in
\eqref{eq:schur}) alone would suffice to derive the desired super-structure,
because $\Theta_{O_{ij}} = 0$ implies $X_i$ and $X_j$ are conditionally
independent \emph{given all other observed and latent variables}. However, this
implication highlights a critical caveat:  in the presence of latent
confounders, $\Theta_{O_{ij}} = 0$ \textit{does not} guarantee that $X_i$ and
$X_j$ are conditionally independent given only the other observed variables.
Instead, conditional independence is only assured when conditioning on all other
variables (both observed and latent). As a direct consequence, relying solely on
$S$ may fail to recover all edges present in the true causal graph in the sense
that missing links may arise wherever latent confounding obscures conditional
dependencies among observables. Even if latent variables are absent,
incorporating information from the low-rank component $L$ can still improve
super-structure learning. This is because $L$ not only captures shared variation
induced by latent confounders in an asymptotic regime, but also helps absorb
dense or spurious correlations from finite data samples or model
misspecification in practical settings, thereby enhancing robustness and
stability in estimation.

To further clarify, as schematically illustrated in
\F~\ref{fig:alvgl_reorganized}, our approach computes the combined adjacency
matrix from both components (\F~\ref{fig:alvgl_reorganized}~(a)):
\begin{equation}
W = |S| + |L|
\label{eq:weights}
\end{equation}
The resulting super-structure (\F~\ref{fig:alvgl_reorganized}~(b)) is then
obtained by thresholding the combined adjacency matrix $\bm{G}^\star =
\mathbb{I}[W > \tau]$.

\begin{remark}
    \textit{In \eqref{eq:schur}, the low-rank component $L$ is subtracted from
    the sparse component $S$ to yield the precision matrix $\tilde{\Theta}_O$,
    consistent with the decomposition of latent-variable graphical models.
    However, in \eqref{eq:weights}, we instead combine $S$ and $L$ additively to
    construct the combined adjacency matrix $W$. This shift reflects a change in
    objective: while the decomposition aims to model conditional independencies,
    super-structure learning seeks to robustly identify potential (either direct
    or confounded) edges. As an analogy to FCI~\cite{zhang2008completeness}, $L$
    can be viewed as capturing strong marginal associations that manifest as
    bidirected edges in the underlying mixed graph. By combining both
    components, our super-structure $\bm{G}^\star$ conservatively includes all
    plausible causal and confounded links, ensuring no true edge is pruned
    prematurely. This strategy aligns with the goal of constructing a reliable
    (yet not necessarily minimal) super-graph of the causal structure.}
\end{remark}

\noindent\textbf{Hypergraph Interpretation.} Because $L$ captures shared
confounding patterns that typically manifest as bidirected edges in ADMGs,
\tool's combined adjacency $W$ implicitly encodes a hypergraph over observed
variables. This provides structural information that
LVGL~\cite{chandrasekaran2012latent} usually discards, and is crucial for
ensuring that no true causal relation is eliminated during thresholding.

The following proposition states that the super-structure learned by \tool is a
valid super structure, i.e., it contains all the edges in the true causal graph,
and thus can be used to guide the differentiable optimization process of
existing differentiable causal discovery methods
(\F~\ref{fig:alvgl_reorganized}~(c)).
\begin{proposition}
    \label{prop:super-structure}
    Let $\bm{G}^\star$ be the super-structure learned by \tool, and let
    $\bm{G}^*$ be the true causal graph. Under assumptions A1--A2,
    $\bm{G}^\star$ is a super structure of $\bm{G}^*$, i.e., $\bm{G}^\star
    \supseteq \bm{G}^*$, with or without latent confounders.
\end{proposition}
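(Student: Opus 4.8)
The plan is to show that every edge of $\bm{G}^*$ produces a strictly positive entry of the combined weight matrix $W=|S|+|L|$ that survives the threshold defining $\bm{G}^\star=\mathbb{I}[W>\tau]$. Because $W$ aggregates the \emph{absolute values} of the two components, it suffices to prove, for each true edge between $X_i$ and $X_j$, that at least one of $S_{ij}$ and $L_{ij}$ is nonzero; the additive form of \eqref{eq:weights} then rules out any cancellation between the sparse and low-rank contributions. I would argue at the population level, where the Schur-complement identity \eqref{eq:schur} relates the decomposition exactly to the true model, and split the argument by edge type in the underlying mixed graph: directed edges arising from $\bm{B}$ and bidirected edges arising from $\Omega_{\bm{N}}$.

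For a directed edge I would argue that it is captured by the sparse component $S$. Writing the marginal precision of the observables as $\Sigma_O^{-1}=(I-\bm{B})\,\Omega_{\bm{N}}^{-1}\,(I-\bm{B})^\top$ and decomposing $\Omega_{\bm{N}}^{-1}$ (via Woodbury) into a diagonal part $D^{-1}$ plus a low-rank part induced by the confounders, the diagonal part contributes exactly $S=(I-\bm{B})\,D^{-1}\,(I-\bm{B})^\top$, whose support is the moralized graph of the direct structure. This is the standard fact that a Gaussian concentration matrix has the moralized DAG as its zero pattern, and an edge always makes its endpoints adjacent after moralization; concretely, the summand indexed by the child of that edge survives because one factor is a diagonal $(I-\bm{B})$ entry (equal to $1$) and the other is the nonzero edge coefficient. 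By A2 (no exact cancellation of parameters), $S_{ij}\neq 0$, hence $W_{ij}\geq |S_{ij}|>0$.

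For a bidirected edge $X_i\leftrightarrow X_j$, caused by a shared latent confounder ($\Omega_{\bm{N},ij}\neq 0$), I would argue that it is captured by the low-rank component. The confounder contributes the term $L=\Theta_{OL}\Theta_L^{-1}\Theta_{LO}$ in \eqref{eq:schur}; since both $X_i$ and $X_j$ load on the common latent, the entry $L_{ij}$ is a sum of products of the nonzero cross-precision terms routed through that latent, and is nonzero under faithfulness (A2). Thus $W_{ij}=|S_{ij}|+|L_{ij}|\geq |L_{ij}|>0$ even when $S_{ij}=0$, which is precisely the regime (latent confounding masking conditional dependence among observables) flagged in \S~\ref{subsec:ssl} as the reason $S$ alone is insufficient.

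Combining the two cases, every true edge yields $W_{ij}>0$; invoking the detectability clause of A2, which guarantees that a genuine dependence manifests above the threshold $\tau$, gives $W_{ij}>\tau$ and therefore $\bm{G}^\star_{ij}=1$ for every edge of $\bm{G}^*$, i.e., $\bm{G}^\star\supseteq \bm{G}^*$. The main obstacle is not the case analysis but making the support correspondence fully rigorous: I must (i) pin down the graph-to-matrix zero-pattern dictionary for both $S$ and $L$ under marginalization of latents, and (ii) bridge the gap between this idealized population decomposition and what \A~\ref{alg:admm} actually returns under finite samples and the truncated-SVD approximation. The threshold $\tau$ in A2 is the device that absorbs this gap: as long as every true-edge weight stays above $\tau$ and the estimator is support-consistent in the sense guaranteed by the identifiability conditions of \citet{chandrasekaran2012latent}, the superset property is preserved.
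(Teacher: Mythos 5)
Your proof is correct and reaches the same endgame as the paper's --- support consistency of the Chandrasekaran-style estimator plus the threshold $\tau$ turning a population-level nonzero into $W_{ij}>\tau$ --- but the middle step is organized quite differently, and arguably more carefully. The paper's proof runs a single unified argument: any edge of $\bm{G}^*$ implies $X_i$ and $X_j$ are conditionally dependent given the remaining observables, hence the corresponding precision entry is nonzero, hence $|S_{ij}|+|L_{ij}|>\tau$ with high probability; in doing so it identifies that precision entry with $S_{ij}$ itself, which sits uneasily with the paper's own discussion in \S~\ref{subsec:ssl}, since for a purely bidirected edge conditioning on the shared latent renders $X_i,X_j$ independent, so $S_{ij}$ (the observed block of the \emph{full} precision) can vanish while only $L_{ij}$ is nonzero. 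Your case split by edge type --- directed edges landing in $S$ via the moralization computation $(I-\bm{B})D^{-1}(I-\bm{B})^{\top}$, bidirected edges landing in $L$ via the Schur/Woodbury term --- handles exactly this situation and makes the additive combination $W=|S|+|L|$ do real work rather than being decorative. What the paper's route buys is brevity and uniformity over DAGs and MAGs (one separation statement covers both edge types); what yours buys is an explicit zero-pattern dictionary for $S$ and $L$ and a correct treatment of confounded-only pairs. The caveats you flag at the end (population vs.\ finite-sample behavior, inexact ADMM, truncated SVD) are precisely the gaps the paper also delegates to the identifiability conditions of \citet{chandrasekaran2012latent}, so you are not missing anything the paper itself supplies.
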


\begin{proof}
We prove that the super-structure $\bm{G}^\star$ learned by \tool contains all
edges of the true causal graph $\bm{G}^*$, regardless of the presence of latent
confounders. For both DAGs and MAGs, under the linear Gaussian assumption (A1)
and faithfulness (A2), any edge $\{i,j\} \in \bm{G}^*$ implies that $X_i
\not\Perp X_j \mid \bm{X}_O \setminus \{X_i, X_j\}$, meaning ${\Theta}_{O,ij}
\neq 0$, where ${\Theta}_O = S$. Under the identifiability
conditions~\cite{chandrasekaran2012latent}, the estimator \eqref{eq:objective}
can recovers the support of $S$, so with high probability (\textit{w.h.p.}),
$|S_{ij}| + |L_{ij}| > \tau$ for all true edges, provided $\tau$ is chosen
appropriately (e.g., $\tau = 0$ for a conservative estimate). Since $W = |S| +
|L|$ is used to construct $\bm{G}^\star = \mathbb{I}[W > \tau]$, this guarantees
$\bm{G}^\star \supseteq \bm{G}^*$ \textit{w.h.p.} Thus, $\bm{G}^\star$ is a
valid super-structure of $\bm{G}^*$.
\end{proof}

In the absence of latent confounders, the super-structure learned by \tool can
be characterized as a moralized DAG of the true causal graph, with the following
proposition stating the property.
\begin{proposition}
    \label{prop:moralized-dag}
    Let $\bm{G}^\star$ be the super-structure learned by \tool, and let
    $\bm{G}^*$ be the true causal graph. If there are no latent confounders and
    the assumptions of \Prop~\ref{prop:super-structure} hold, then
    $\bm{G}^\star$ is a moralized DAG of $\bm{G}^*$.
\end{proposition}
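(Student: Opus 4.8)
The plan is to reduce the claim to the classical equivalence between the support of the Gaussian precision matrix and the moral graph of the data-generating DAG. The starting observation is that when no latent confounders are present there are no latent variables to marginalize out, so in the Schur-complement identity \eqref{eq:schur} the cross-precision block $\Theta_{OL}$ vanishes and hence the low-rank term $L = \Theta_{OL}\Theta_L^{-1}\Theta_{LO} = 0$. At the population level this forces $\tilde{\Theta}_O = \Theta_O = S$, and through \eqref{eq:weights} it gives $W = |S| + |L| = |\Theta_O|$. Choosing $\tau = 0$ (or any threshold strictly below the smallest nonzero magnitude of $\Theta_O$), the learned super-structure $\bm{G}^\star = \mathbb{I}[W > \tau]$ coincides exactly with $\operatorname{supp}(\Theta_O)$. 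It therefore suffices to prove that $\operatorname{supp}(\Theta_O)$ is precisely the moralized DAG $\mathcal{M}(\bm{G}^*)$.

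I would then establish the set equality between $\operatorname{supp}(\Theta_O)$ and the edge set of $\mathcal{M}(\bm{G}^*)$ in two directions, using the standard Gaussian graphical-model fact recalled in \S~\ref{subsec:glasso} (following Loh and B\"uhlmann) that $\Theta_{O,ij}\neq 0$ if and only if $X_i$ and $X_j$ are conditionally \emph{dependent} given all other observed variables. For the inclusion $\operatorname{supp}(\Theta_O)\subseteq \mathcal{M}(\bm{G}^*)$, I would use that the Gaussian law is Markov to the moral graph: whenever $i$ and $j$ are non-adjacent in $\mathcal{M}(\bm{G}^*)$ they are $d$-separated by $\bm{X}_O\setminus\{X_i,X_j\}$, hence conditionally independent and $\Theta_{O,ij}=0$. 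For the reverse inclusion $\mathcal{M}(\bm{G}^*)\subseteq \operatorname{supp}(\Theta_O)$ --- which is exactly where moralization's co-parent (\emph{marrying}) edges must be recovered --- I would invoke faithfulness (A2): adjacency in $\mathcal{M}(\bm{G}^*)$, i.e. $i,j$ adjacent in $\bm{G}^*$ or sharing a common child, means $i,j$ are not $d$-separated given the rest, so faithfulness rules out exact cancellation and forces $\Theta_{O,ij}\neq 0$. \Prop~\ref{prop:super-structure} already supplies the directed-adjacency edges, so the content genuinely added here is the co-parent edges.

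The main obstacle is controlling the low-rank component $L$. The argument above is clean at the population level, where $L$ is provably zero, but in finite samples the nuclear-norm term in \eqref{eq:objective} may retain a small nonzero $L$ that absorbs spurious correlations, in which case $W = |S| + |L|$ could admit edges lying outside $\mathcal{M}(\bm{G}^*)$. To make the statement rigorous I would either phrase the conclusion in the population/identifiability regime of \citet{chandrasekaran2012latent}, where the decomposition recovers $L = 0$ and the exact support of $S$, or argue that with high probability the support of the estimated $L$ is contained in $\mathcal{M}(\bm{G}^*)$ so the additive combination does not enlarge the edge set beyond the moral graph. A secondary, more routine care point is the co-parent direction above: one must ensure moralization does not fail to cover a nonzero off-diagonal of $\Theta_O$, which is precisely the conditional-independence-graph/moral-graph equivalence and follows once faithfulness excludes parameter cancellations.
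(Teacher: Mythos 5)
Your proof takes essentially the same route as the paper's: the decisive step in both is that without latent confounders $\Theta_{OL}=0$ forces $L=0$ and $W=|\Theta_O|$, after which the claim reduces to the classical equivalence between $\operatorname{supp}(\Theta_O)$ and the moral graph of $\bm{G}^*$. The paper's sketch simply cites Theorem~1 of \cite{ng2021reliable} for that equivalence, whereas you re-derive it directly via the global Markov property (for $\operatorname{supp}(\Theta_O)\subseteq\mathcal{M}(\bm{G}^*)$) and faithfulness/no-cancellation (for the co-parent direction), and you additionally flag the finite-sample behaviour of the estimated $L$ that the paper leaves implicit; both arguments are correct.
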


\begin{proofsketch}
When there are no latent confounders, the low-rank component $L$ is zero and
$\Theta_O$ is equal to $\tilde{\Theta}_O$. Then, the above proposition is a
direct consequence of Theorem 1 in~\cite{ng2021reliable}.
\end{proofsketch}

\begin{corollary}
    With \Prop~\ref{prop:moralized-dag} and \Lem~A.3
    of~\cite{nazaret2024stable}, if $\bm{G}^*$ has maximal in-degree of $k$ and
    let $d$ be the number of observed variables, then $\bm{G}^\star$ has at most
    $\mathcal{O}(dk^2)$ edges, which is usually much smaller than the
    $\mathcal{O}(d^2)$ edges in a full adjacency matrix.
    \label{cor:super-structure-size}
\end{corollary}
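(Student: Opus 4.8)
The plan is to treat this as a purely combinatorial counting argument on the moral graph, leveraging the structural characterization already in hand. By \Prop~\ref{prop:moralized-dag}, the super-structure $\bm{G}^\star$ is the moralized DAG of the true graph $\bm{G}^*$, so it suffices to bound the number of edges in the moral graph of a DAG whose maximum in-degree is $k$. The key observation is that the moral graph is assembled from exactly two sources of edges, and each can be bounded separately in terms of $k$.

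Concretely, I would split the edge count into (i) the undirected skeleton, obtained by dropping orientations from the directed edges of $\bm{G}^*$, and (ii) the moralization (``marriage'') edges, which connect every pair of nodes sharing a common child. For (i), since each node has at most $k$ parents, $\bm{G}^*$ has at most $dk$ directed edges, hence at most $dk$ skeleton edges. For (ii), I fix a node $v$ and note that every marriage edge it induces joins two of its parents; as $|\mathrm{Pa}(v)| \le k$, node $v$ contributes at most $\binom{k}{2}$ such pairs. Summing over all $d$ nodes yields at most $d\binom{k}{2} = \mathcal{O}(dk^2)$ moralization edges. Combining the two bounds gives a total of at most $dk + d\binom{k}{2} = \mathcal{O}(dk^2)$ edges, which is exactly the claimed estimate; comparing against the $\binom{d}{2} = \mathcal{O}(d^2)$ edges of the complete graph then establishes the stated reduction whenever $k \ll \sqrt{d}$. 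Rather than re-derive the binomial summation from scratch, I would invoke the edge-count bound of \Lem~A.3 in~\cite{nazaret2024stable} directly to certify this $\mathcal{O}(dk^2)$ figure.

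The only real subtlety I anticipate lies in correctly aligning the notion of ``moralized DAG'' used in \Prop~\ref{prop:moralized-dag} with the edge-counting hypothesis of \Lem~A.3 of~\cite{nazaret2024stable}: specifically, confirming that it is the \emph{in-degree} bound $k$ (not the total degree) that drives the $\binom{k}{2}$-per-node moralization term, and that any double-counting of marriage edges shared across distinct children can only tighten the bound. Everything else is a routine handshake-style summation, so I expect no serious obstacle beyond this bookkeeping.
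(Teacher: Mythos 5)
Your proposal is correct and follows essentially the same route as the paper, which offers no separate proof but simply derives the corollary from \Prop~\ref{prop:moralized-dag} together with \Lem~A.3 of~\cite{nazaret2024stable}; your explicit count of at most $dk$ skeleton edges plus $d\binom{k}{2}$ marriage edges is exactly the content of that cited lemma. The subtlety you flag about in-degree versus total degree is resolved correctly in your sketch, so nothing further is needed.
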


\subsection{Computational Complexity}
\label{subsec:complexity}

For each ADMM iteration, the dominant costs are a matrix inversion
($\mathcal{O}(d^3)$) and an SVD with rank truncation. This is $\mathcal{O}(d^3)$
in the worst case, but can be reduced to $\mathcal{O}(d^2 r^\star)$ using
economy decomposition, where $d$ is the number of observed variables and
$r^\star$ is the effective rank of the low-rank component $L$. The method
usually takes a small fraction of the entire runtime of differentiable causal
discovery methods, while providing a significant boost in optimization
efficiency, as will be shown in \S~\ref{sec:eval}.

\subsection{Integration in Differentiable Optimization}
\label{subsec:edge-masking}

\tool outputs a binary super-structure mask $\bm{G}^\star \in \{0,1\}^{d\times
d}$ encoding the set of edges that are \emph{allowed} to be nonzero. Let
$\mathcal{Z}=\{(i,j)\mid \bm{G}^\star_{ij}=0\}$ denote the forbidden edges. A
differentiable causal discovery method solves
\begin{equation}
    \min_{W\in\mathbb{R}^{d\times d}} \;\mathcal{J}(W)
    \quad\text{s.t.}\quad h(W)=0,
\end{equation}
where $h(W)$ enforces acyclicity or other structural constraints. Incorporating
the \tool super-structure introduces additional linear equality constraints
\begin{equation}
    W_{ij}=0\quad\text{for all } (i,j)\in\mathcal{Z},
\end{equation}
which restrict the feasible set to the linear subspace
\begin{equation}
\mathcal{W}_{\bm{G}^\star} := \{\,W\in\mathbb{R}^{d\times d} : W\circ(1-\bm{G}^\star)=0\,\},    
\end{equation}
reducing the intrinsic optimization dimension from $d^2$ parameters to
$\|\bm{G}^\star\|_0$. By \Prop~\ref{prop:super-structure}, \tool recovers a
valid super-structure \textit{w.h.p.}, meaning every true edge $(i,j)$ satisfies
$\bm{G}^\star_{ij}=1$. Thus, if $W^\star$ denotes the true weighted adjacency
matrix, $W^\star\in\mathcal{W}_M$, and the following solution-preservation
result holds.

\begin{proposition}[Solution Preservation]
If $\bm{G}^\star$ is a valid super-structure for the true solution $W^\star$
(i.e., $\bm{G}^\star_{ij}=0\implies W^\star_{ij}=0$), then any global minimizer
of the full problem remains a global minimizer of the restricted problem
\begin{equation}
\min_{W\in\mathcal{W}_{\bm{G}^\star}}\mathcal{J}(W)
\quad\text{s.t.}\quad h(W)=0.
\end{equation}
\end{proposition}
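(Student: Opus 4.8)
The plan is to exploit the nested structure of the two feasible sets, so that the restriction can only shrink the search domain without discarding the true optimum. First I would verify that the true solution $W^\star$ is feasible for the restricted problem. By hypothesis, $\bm{G}^\star_{ij}=0 \implies W^\star_{ij}=0$, which is precisely the defining condition $W^\star \circ (1-\bm{G}^\star)=0$ for membership in $\mathcal{W}_{\bm{G}^\star}$; hence $W^\star \in \mathcal{W}_{\bm{G}^\star}$. Moreover, since $W^\star$ is the weighted adjacency matrix of a valid (acyclic, or ancestral in the MAG case) causal graph, it satisfies the structural constraint $h(W^\star)=0$. Therefore $W^\star$ lies in both the full feasible set $\mathcal{F} := \{W : h(W)=0\}$ and the restricted feasible set $\mathcal{F}_{\bm{G}^\star} := \{W \in \mathcal{W}_{\bm{G}^\star} : h(W)=0\}$.

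Next I would invoke the trivial containment $\mathcal{F}_{\bm{G}^\star} \subseteq \mathcal{F}$, which holds because passing to $\mathcal{W}_{\bm{G}^\star}$ only imposes additional linear equality constraints. Minimizing the same objective $\mathcal{J}$ over a smaller set cannot decrease the optimal value, so $\min_{\mathcal{F}_{\bm{G}^\star}} \mathcal{J} \ge \min_{\mathcal{F}} \mathcal{J}$. Combining this inequality with the feasibility of $W^\star$ in the restricted set yields the chain $\min_{\mathcal{F}} \mathcal{J} = \mathcal{J}(W^\star) \ge \min_{\mathcal{F}_{\bm{G}^\star}} \mathcal{J} \ge \min_{\mathcal{F}} \mathcal{J}$, where the first equality uses that $W^\star$ attains the full minimum. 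All three quantities must then coincide, so the optimal value is preserved and $W^\star$ attains the restricted minimum, i.e., it remains a global minimizer.

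The hard part will be making rigorous the literal phrasing ``any global minimizer of the full problem.'' A global minimizer $\hat{W}$ of the full problem is automatically a global minimizer of the restricted problem only when it too lies in $\mathcal{W}_{\bm{G}^\star}$; this is immediate when the full minimizer is unique, which holds under the identifiability of the linear Gaussian SCM guaranteed by A1--A2 (so that $\hat{W}=W^\star$). In the general, non-identifiable case I would state the conclusion more carefully: the restriction incurs no loss of optimality, the optimal value is unchanged, and the true solution $W^\star$ is retained among the global minimizers of the restricted problem. Since the entire argument rests only on the feasibility of $W^\star$ (supplied by \Prop~\ref{prop:super-structure}) together with the monotonicity of minima over nested sets, no further estimates are needed; the subtlety is purely in the quantifier, not in any additional mathematics.
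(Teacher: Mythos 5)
Your proof is correct and takes essentially the approach the paper intends: the paper states this proposition without supplying any proof, treating it as immediate from the observation that restricting to $\mathcal{W}_{\bm{G}^\star}$ only removes configurations that violate the super-structure, which is exactly your nested-feasible-set argument ($W^\star$ is feasible for the restricted problem, the restricted feasible set is contained in the full one, hence the optimal values coincide and $W^\star$ remains a minimizer). Your remark on the quantifier is a genuine improvement rather than a deviation: as literally stated, ``any global minimizer of the full problem'' need not lie in $\mathcal{W}_{\bm{G}^\star}$ when the full minimizer is non-unique, so the defensible conclusion is precisely the one you give --- the optimal value is unchanged and the true solution $W^\star$ is retained among the restricted minimizers, with the stronger claim holding only under identifiability.
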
 

This implies that restricting the domain does not eliminate valid solutions.
Instead, it removes an exponentially large portion of structurally impossible
configurations. In convex analogues, projecting onto such a constraint subspace
both preserves the optimum and can \emph{improve conditioning} by eliminating
noisy gradient directions, yielding faster convergence. While $h(W)$ introduces
non-convexity, empirical results in \S\ref{sec:eval} show that the same benefits
manifest in practice: faster convergence, more stable L--BFGS updates, and large
runtime reductions in high-dimensional regimes.

\paragraph{Implementation via projected gradients.}
During optimization, we maintain $W^{(0)} \in \mathcal{W}_M$ and update only
free parameters. At iteration $k$, if $G^{(k)}=\nabla\mathcal{J}(W^{(k)})$ is
the gradient, the projected update is
\begin{equation}
    G^{(k)}_{\text{proj}} = G^{(k)} \circ M,
\end{equation}
ensuring forbidden edges never change. Similarly, any second-order information
is projected to $\mathcal{W}_{\bm{G}^\star}$ as well. The L--BFGS solver (or any
gradient-based optimizer) then operates entirely within
$\mathcal{W}_{\bm{G}^\star}$. This direct enforcement of hard constraints
differs from soft mask regularization approaches, which merely discourage
unwanted edges. Because \tool achieves consistently high recall, the risk of
excluding true edges is negligible, making hard constraints both safe and far
more computationally effective. This principle is also adopted in recent causal
discovery methods~\cite{ma2024scalable,nazaret2024stable}. 

\section{Evaluation}
\label{sec:eval}
In evaluation, we aim to evaluate the performance of \tool from three
perspectives: \ding{202} universal enhancement to diverse differentiable causal
discovery methods; \ding{203} comparison with other super-structure learning
methods; \ding{204} robustness to diverse settings and misspecified data and
\ding{205} a case study on real-world datasets.

\parh{Data Generation.}~Our experiments involve two settings, including linear
Gaussian SCMs with latent confounders and linear Gaussian SCMs without latent
confounders. For the first setting, we follow the same parametric setup as
ABIC~\cite{bhattacharya2021differentiable} and SPOT~\cite{ma2024scalable} and
generate ADMGs with 15--25 nodes and 1000 samples. For the second setting, we
follow the same parametric setup as NOTEARS~\cite{zheng2018dags} and generate
DAGs with 100--200 nodes and 1000 samples. For each setting, we generate
datasets with Erdos-Renyi (ER) graphs with a fixed degree of 1 in main
comparison. We also explore the effectiveness of \tool on larger graphs,
different graph structures, including scale-free (SF) graphs and bipartite (BP)
graphs, degrees and sample sizes in \S~\ref{subsec:rq3}.

\parh{Baselines.}~We implement \tool on four representative differentiable
causal discovery methods: NOTEARS~\cite{zheng2018dags} (2018),
GOLEM~\cite{ng2020role} (2020), DAGMA~\cite{bello2022dagma} (2022), and
ABIC~\cite{bhattacharya2021differentiable} (2021). The first three methods are
designed for linear Gaussian data without latent confounders, while ABIC is
designed for linear Gaussian data with latent confounders. We denote the
resulting methods as NOTEARS$^\star$, GOLEM$^\star$, DAGMA$^\star$, and
ABIC$^\star$, respectively.

In addition, we implement graphical lasso (GLasso)~\cite{friedman2008sparse},
latent variable graphical lasso (LVGL)~\cite{chandrasekaran2012latent}, and
skeleton learning~\cite{ma2024scalable} as alternative super structure learning
methods. These super structures replicate the functionality of
SDCD~\cite{nazaret2024stable} (2024), \tool's ablation without the optimizations
described in \S~\ref{sec:dlvgl}, and SPOT~\cite{ma2024scalable} (2024),
respectively. We denote these methods as GLasso, LVGL, and SL, and use NOTEARS
as the base differentiable causal discovery method.

All baselines use the default hyperparameters suggested in their documentation.
For \tool, we set the hyperparameters $\lambda_s = 0.05$, $\lambda_l = 0.05$,
$\tau = 1\times10^{-6}$, and $T = 500$ for all experiments.

\subsection{End-to-end Comparison}
\label{subsec:rq1}

\begin{figure}[t]
    \centering
    \includegraphics[width=0.75\linewidth]{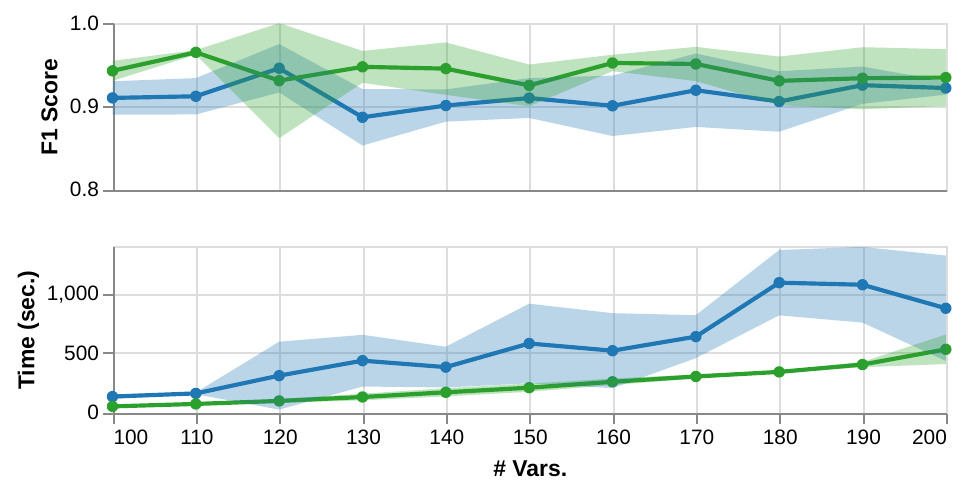}
    \caption{ \textcolor{vegaBlue}{NOTEARS} vs.
	\textcolor{vegaGreen}{NOTEARS$^\star$}.}
    \label{fig:rq1:notears}
\end{figure}

\begin{figure}
    \centering
    \includegraphics[width=0.75\linewidth]{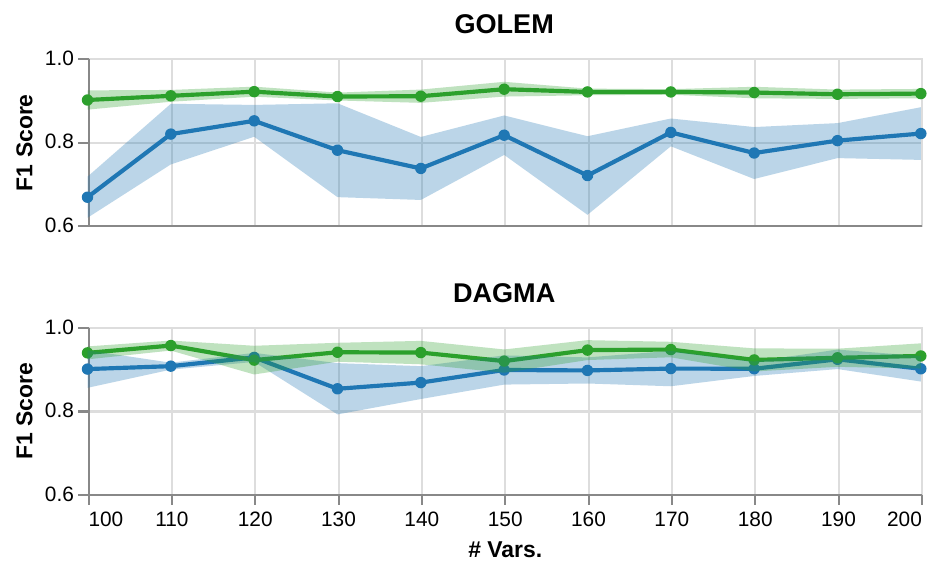}
    \caption{ \textcolor{vegaBlue}{GOLEM} vs.
	\textcolor{vegaGreen}{GOLEM$^\star$} (upper) and \textcolor{vegaBlue}{DAGMA}
	vs. \textcolor{vegaGreen}{DAGMA$^\star$.}  (lower)}
    \label{fig:rq1:golem-dagma}
\end{figure}

\parh{NOTEARS.}~\F~\ref{fig:rq1:notears} shows that {NOTEARS$^\star$}
consistently outperforms {NOTEARS} on synthetic datasets, achieving an average
F1 improvement of 3.3\%. Runtime is reduced by 52.9\%, and importantly, variance
in runtime is also substantially lower (29.30 vs. 233.09). This reflects the
stabilizing effect of \tool, which constrains the search space via learned
super-structures. Furthermore, NOTEARS$^\star$ scales more gracefully with graph
size, which validates \Cor~\ref{cor:super-structure-size}, as its search space
does not grow quadratically like NOTEARS.

\parh{GOLEM and DAGMA.}~\F~\ref{fig:rq1:golem-dagma} compares GOLEM and DAGMA
with their \textcolor{vegaGreen}{$^\star$} counterparts. We report only F1
scores due to fixed iteration settings ($1{\times}10^5$ for GOLEM and
$1.8{\times}10^6$ for DAGMA), which yield deterministic runtimes. GOLEM$^\star$
and DAGMA$^\star$ consistently outperform their baselines, with F1 improvements
of 17.5\% and 4.0\%, respectively. These gains are attributed to more effective
convergence enabled by \tool, especially in high-dimensional regimes where
baseline methods tend to under-optimize.

\begin{figure}[t]
    \centering
    \includegraphics[width=0.75\linewidth]{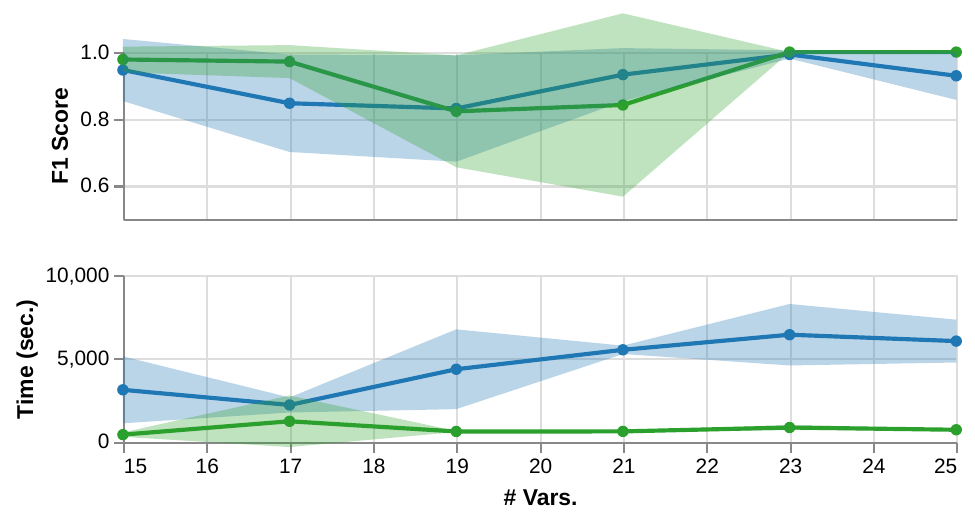}
    \caption{ \textcolor{vegaBlue}{ABIC} vs.
	\textcolor{vegaGreen}{ABIC$^\star$}.}
    \label{fig:rq1:abic}
\end{figure}

\parh{ABIC.}~\F~\ref{fig:rq1:abic} shows results for ABIC vs. ABIC$^\star$ on
smaller graphs (15--25 nodes) under linear Gaussian SCMs with latent
confounders. We follow the default strategy of selecting the best BIC-scoring
graph across five runs. On average, ABIC$^\star$ improves F1 by 3.1\% and
reduces runtime by 77.4\%. The runtime gain is particularly significant due to
the simplification introduced by super-structure learning in this
lower-dimensional setting. One outlier is observed at 21 nodes, where
ABIC$^\star$ underperforms, likely due to instability in optimizing compound
graphs under latent confounding.

\subsection{Super-Structure Learning Effectiveness}
\label{subsec:rq2}

\begin{figure}
    \centering
    \includegraphics[width=0.75\linewidth]{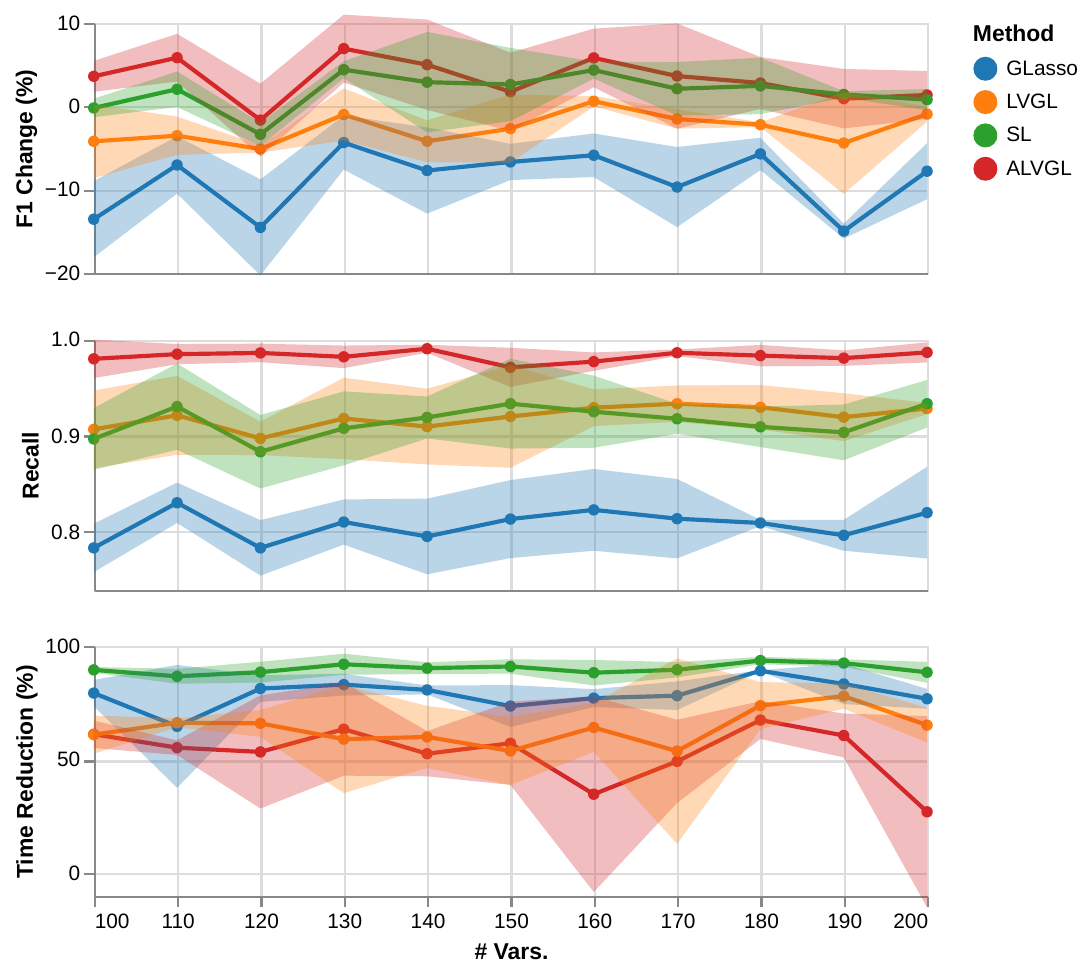}
    \caption{Comparison of different super-structure learning methods on
	NOTEARS.}
    \label{fig:rq2}
\end{figure}

To assess the unique advantages of \tool in differentiable causal discovery, we
compare it against three alternative super-structure learning approaches applied
to NOTEARS:
\begin{itemize}
\item \textbf{GLasso}: Precision matrix estimation without low-rank modeling,
conceptually similar to SDCD~\cite{nazaret2024stable};
\item \textbf{LVGL}: ADMM-based latent-variable graphical
Lasso~\cite{ma2013alternating}, omitting our inexact ADMM update and
super-structure learning components;
\item \textbf{SL}: Skeleton-based initialization using the PC algorithm, similar
to SPOT~\cite{ma2024scalable}.
\end{itemize}
\F~\ref{fig:rq2} reports relative changes in F1 score and runtime, along with
the average recall of the learned super-structures—crucial for enabling
effective downstream optimization. \tool achieves the highest F1 improvement
(+3.3\%), followed by SL (+1.8\%), while GLasso and LVGL underperform, primarily
due to low super-structure recall. For example, GLasso captures only $0.81 \pm
0.016$ of true edges, causing irreversible omission during NOTEARS optimization.
In contrast, \tool\ achieves a recall of $0.98 \pm 0.005$, ensuring most true
edges are retained and refined. All methods reduce NOTEARS runtime, with \tool
achieving a 52.9\% speedup despite its larger super-structure. This modest
trade-off reflects its broader search space, yet still confirms its efficiency
in guiding the optimization.

\subsection{Robustness}
\label{subsec:rq3}

We evaluate the robustness of \tool\ under varying configurations: graph types,
node degrees, sample sizes, and noise distributions. These experiments 
demonstrate \tool's ability to maintain performance advantages across diverse
scenarios while revealing important insights about when and why the method
provides greater benefits.

\begin{table}[h]
\centering
\caption{F1 score and runtime (sec.) comparison of NOTEARS with and without
\tool. The best results are in bold. Varying graph types.}
\label{tab:rq3:graph-type}
\footnotesize
\begin{tabular}{lccc}
\toprule
Graph Type & ER & SF & BP  \\
\midrule
F1 w/o \tool& 0.928 $\pm$ 0.023 & 0.979 $\pm$ 0.011  & 0.882 $\pm$ 0.017 \\
F1 w/ \tool& \textbf{0.948 $\pm$ 0.009} & \textbf{0.980 $\pm$ 0.009}  & \textbf{0.988 $\pm$ 0.011} \\
Time w/o \tool& 349.1 $\pm$ 49.5 & 145.7 $\pm$ 45.0 & 185.6 $\pm$ 26.4 \\
Time w/ \tool& \textbf{189.5 $\pm$ 12.7} & \textbf{193.9 $\pm$ 19.2} & \textbf{173.9 $\pm$ 6.6} \\
\bottomrule
\end{tabular}
\end{table}

\parh{Graph Types.}~While our main analysis focuses on ER graphs, real-world
graphs often follow different topologies, such as SF or BP structures. As shown
in \T~\ref{tab:rq3:graph-type}, \tool consistently improves F1 scores over
vanilla differentiable causal discovery algorithms across all types, with
notable gains on BP graphs (12.0\% improvement) and demonstrates its ability to
capture core causal dependencies even in non-standard structures. The
particularly strong performance on BP graphs can be attributed to their more
structured sparsity patterns, which align well with \tool's sparse+low-rank
decomposition: the bipartite structure naturally induces block-sparse patterns
in the precision matrix that are efficiently captured by the sparse component
$S$, leading to more accurate super-structure learning.

\begin{table}[h]
\centering
\caption{F1 score and runtime (sec.) comparison of NOTEARS with and without \tool. The best results are in bold. Varying node degree.}
\label{tab:rq3:graph-density}
\footnotesize
\begin{tabular}{lccccccc}
\toprule
Degree & 1.00 & 1.25 & 1.50 & 1.75 & 2.00 & 2.25 & 2.50 \\
\midrule
F1 w/o \tool & 0.93 $\pm$ 0.02 & \textbf{0.95 $\pm$ 0.02} & \textbf{0.95 $\pm$ 0.02} & 0.93 $\pm$ 0.02 & \textbf{0.96 $\pm$ 0.01} & \textbf{0.96 $\pm$ 0.01} & \textbf{0.96 $\pm$ 0.01} \\
F1 w/ \tool  & \textbf{0.95 $\pm$ 0.01} & \textbf{0.95 $\pm$ 0.01} & \textbf{0.95 $\pm$ 0.01} & \textbf{0.95 $\pm$ 0.01} & 0.95 $\pm$ 0.01 & \textbf{0.96 $\pm$ 0.01} & 0.95 $\pm$ 0.01 \\
Time w/o \tool & 350 $\pm$ 51 & 758 $\pm$ 128 & 1122 $\pm$ 140 & 1188 $\pm$ 183 & 1731 $\pm$ 52 & 1709 $\pm$ 44 & 1670 $\pm$ 282 \\
Time w/ \tool  & \textbf{191 $\pm$ 13} & \textbf{264 $\pm$ 58} & \textbf{472 $\pm$ 242} & \textbf{374 $\pm$ 121} & \textbf{707 $\pm$ 176} & \textbf{838 $\pm$ 360} & \textbf{814 $\pm$ 151} \\
\bottomrule
\end{tabular}
\end{table}

\parh{Node Degree.}~\T~\ref{tab:rq3:graph-density} reveals an interesting
pattern: \tool's F1 improvement over vanilla algorithms is most pronounced at
lower degrees (1.0--1.75), while performance becomes comparable at higher
degrees (2.0--2.5). This observation reflects the inherent trade-off in
super-structure learning. At lower degrees, the true causal graph is sparser,
and \tool's ability to prune the search space from $\mathcal{O}(d^2)$ to
$\mathcal{O}(dk^2)$ (as stated in \Cor~\ref{cor:super-structure-size}) provides
substantial benefits by eliminating spurious edges that would otherwise mislead
vanilla algorithms. However, at higher degrees, two competing effects emerge:
\ding{192} the moralized graph becomes denser, reducing the relative advantage
of super-structure pruning, and \ding{193} vanilla algorithms themselves perform
better on denser graphs as the increased signal strength makes the optimization
landscape smoother. Despite this, \tool maintains roughly 2$\times$-3$\times$
(1.8$\times$-3.2$\times$) runtime improvements across all degree settings,
confirming its computational efficiency even when accuracy gains diminish. The
runtime benefits warrant further explanation. Even though higher-degree graphs
reduce the sparsity advantage, \tool's super-structure initialization provides a
better starting point for gradient-based optimization. By constraining the
search space (as described in \S~\ref{subsec:edge-masking}), \tool prevents the
optimizer from exploring large regions of infeasible solutions, leading to
faster convergence regardless of the final graph density.

\begin{table}[h]
\centering
\caption{F1 score and runtime (sec.) comparison of NOTEARS with and without \tool. The best results are in bold. Varying sample size.}
\label{tab:rq3:sample-size}
\footnotesize
\begin{tabular}{lcccc}
\toprule
$n$ & 500 & 1000 & 5000 & 10000 \\
\midrule
F1 w/o \tool & 0.925 $\pm$ 0.027 & 0.928 $\pm$ 0.023 & 0.925 $\pm$ 0.025 & 0.929 $\pm$ 0.019 \\
F1 w/ \tool  & \textbf{0.949 $\pm$ 0.003} & \textbf{0.948 $\pm$ 0.009} & \textbf{0.965 $\pm$ 0.023} & \textbf{0.954 $\pm$ 0.035} \\
Time w/o \tool & 646.1 $\pm$ 372.9 & 377.5 $\pm$ 71.7 & 619.0 $\pm$ 268.7 & 941.8 $\pm$ 279.3 \\
Time w/ \tool  & \textbf{247.0 $\pm$ 107.1} & \textbf{195.3 $\pm$ 16.6} & \textbf{572.4 $\pm$ 193.8} & \textbf{933.7 $\pm$ 45.7} \\
\bottomrule
\end{tabular}
\end{table}

\parh{Sample Size.}~\T~\ref{tab:rq3:sample-size} shows that F1 improvements
remain modest but consistent across all sample sizes, ranging from an absolute
increase of 0.024 at $n=500$ to 0.025 at $n=10000$ (corresponding to roughly
2.6-2.7\% relative improvement over an already high baseline). In contrast,
runtime benefits diminish substantially, from a 61.8\% reduction at $n=500$ to
only 0.9\% at $n=10000$. This pattern reflects the changing computational
bottleneck as sample size increases. At small sample sizes, vanilla algorithms
face two challenges: \ding{192} high variance in covariance estimation leads to
noisy gradients and slower convergence, and \ding{193} the search space contains
many spurious edges that must be explored and rejected. \tool addresses both:
its low-rank component $L$ stabilizes the precision matrix estimate (as
motivated in \S~\ref{subsec:glasso}), and its super-structure pruning eliminates
spurious edges a priori. The combination yields substantial speedups (62\%
faster). However, as $n$ increases, the computational dynamics shift. With more
samples, the empirical covariance $\hat{\Sigma}$ becomes well-conditioned,
making gradient estimation reliable even in the full $\mathcal{O}(d^2)$ search
space. vanilla algorithms themselves can now converge efficiently without
super-structure guidance where the optimization is no longer bottlenecked by
search space size but by the fundamental cost of processing large datasets.
Specifically, each gradient computation scales as $\mathcal{O}(nd^2+d^3)$, and
at $n=10000$, this sample size-dependent gradient computation cost dominates any
savings from pruning edges. Meanwhile, \tool still incurs the overhead of
super-structure learning (covariance computation and ADMM), which becomes
relatively more expensive as vanilla algorithms' own runtime stabilizes. This
explains the vanishing runtime advantage: at $n=10000$, both methods spend most
time computing gradients over large data matrices rather than exploring the
graph structure. The $\sim$1\% speedup reflects only the marginal benefit of
reducing the per-iteration edge count, which is overshadowed by the fixed cost
of gradient computation. Having said that, given that many real-world
applications operate in small-sample regimes ($\sim$100--1000 samples) due to
data collection constraints, \tool's significant runtime advantages in these
settings remain highly relevant and valuable.

\begin{table}[h]
\centering
\caption{F1 score and runtime (sec.) comparison of NOTEARS with and without \tool. The best results are in bold. Varying noise type.}
\label{tab:rq3:noise-type}
\footnotesize
\begin{tabular}{lcccc}
\toprule
Noise Type & Gaussian & Exponential & Gumbel & Uniform \\
\midrule
F1 w/o \tool & 0.928 $\pm$ 0.023 & \textbf{0.925 $\pm$ 0.016} & \textbf{0.914 $\pm$ 0.023} & 0.840 $\pm$ 0.017 \\
F1 w/ \tool  & \textbf{0.948 $\pm$ 0.009} & 0.914 $\pm$ 0.018 & 0.904 $\pm$ 0.043 & \textbf{0.902 $\pm$ 0.032} \\
Time w/o \tool & 351.5 $\pm$ 41.5 & 455.2 $\pm$ 151.4 & 526.3 $\pm$ 127.5 & 298.6 $\pm$ 177.9 \\
Time w/ \tool  & \textbf{189.4 $\pm$ 16.0} & \textbf{257.4 $\pm$ 53.1} & \textbf{210.0 $\pm$ 25.1} & \textbf{180.2 $\pm$ 17.2} \\
\bottomrule
\end{tabular}
\end{table}

\parh{Non-Gaussian Noise.}~Although \tool is grounded in linear Gaussian
assumptions, we test its robustness on data with exponential, Gumbel, and
uniform noise (see \T~\ref{tab:rq3:noise-type}). Results show graceful
degradation: F1 scores remain competitive under exponential noise (with slight
degradation) and improve substantially under uniform noise (from 0.84 to 0.90).
The varying performance can be explained by the degree of distributional
mismatch. Exponential and Gumbel distributions are asymmetric and heavy-tailed,
violating Gaussianity more severely and potentially introducing bias in the
precision matrix estimate. In contrast, uniform noise, while non-Gaussian, is
symmetric and bounded, leading to less severe misspecification. Under uniform
noise, \tool's low-rank component effectively captures deviations from
Gaussianity as latent structure, inadvertently improving robustness.
Importantly, \tool retains substantial runtime benefits across all noise types
(about 40-60\% reduction), which suggests strong practical robustness to
distributional misspecification. We presume this is because the super-structure
learning step primarily relies on correlation patterns rather than exact
distributional assumptions, allowing it to guide optimization effectively even
when the Gaussian assumption is violated. NOTEARS with or without \tool\ fails
under logistic and Poisson noise due to optimization instability and are
excluded from this analysis; exploring discrete and non-linear extensions
remains future work.

\parh{Summary.}~These robustness experiments reveal that \tool provides the
greatest accuracy improvements when: \ding{192} graphs are sparse (low degree),
\ding{193} sample sizes are limited, and \ding{194} noise distributions are
Gaussian or moderately non-Gaussian but symmetric. However, computational
benefits persist across all settings, making \tool a universally beneficial
enhancement for differentiable causal discovery pipelines, particularly in
high-dimensional regimes where optimization efficiency is critical.

\subsection{Real-world Data}
\label{subsec:rq4}

\begin{table}[h]
\centering
\caption{Precision, recall and F1 score on Sachs dataset. The best results are in bold.}
\label{tab: rq4:sachs}
\footnotesize
\begin{tabular}{lcccc}
\toprule
Methods & NOTEARS$^\star$ & NOTEARS w/ GLasso & NOTEARS w/ LVGL & NOTEARS \\
\midrule
Precision & 0.67 & 0.63 & \textbf{0.71} & 0.57 \\
Recall & \textbf{0.32} & 0.26 & 0.26 & 0.21 \\
F1-Score & \textbf{0.43} & 0.37 & 0.38 & 0.31 \\
\bottomrule
\end{tabular}
\end{table}

We evaluate \tool on the Sachs dataset~\cite{sachs2005causal}, which contains
7466 cells ($n = 7466$) and flow cytometry measurements of 11 ($d = 11$)
phosphorylated proteins and phospholipids. The F1 score is computed against the
consensus CPDAG ground truth. As shown in \T~\ref{tab: rq4:sachs},
NOTEARS$^\star$ achieves the highest F1 score of 0.43, significantly
outperforming the baseline NOTEARS score of 0.31.

In this setting ($n \gg d$), while covariance estimation is stable, the primary
challenge stems from complex biological dependencies that are not fully captured
by the strict linear Gaussian assumptions of the model. This misspecification
creates a difficult optimization landscape, leading baseline NOTEARS to miss a
large proportion of true causal edges, resulting in a poor recall of 0.21. The
key advantage of \tool lies in providing a reliable estimation of the
super-structure to guide the search. By identifying a comprehensive set of
plausible edges that capture both direct signals and associations potentially
obscured by confounding, it enables NOTEARS to focus its optimization on the
variable pairs that matter most in the underlying graph. This targeted search
leads to a substantial improvement in recall to 0.32 and the best overall F1
score.

Comparing the alternatives highlights the effectiveness of \tool's specific
design. NOTEARS w/ GLasso ignores potential latent structures, yielding only
partial improvement. NOTEARS w/ LVGL employs a standard decomposition that
appears too conservative for this task; while it achieves high precision (0.71),
its low recall (0.26) suggests it prematurely prunes true edges from the search
space. NOTEARS$^\star$, by explicitly combining sparse and low-rank evidence
into the super-structure, strikes the best balance, maximizing recall among
super-structure methods and enabling the downstream optimizer to reconstruct the
most accurate graph.

Unlike our synthetic benchmarks, Sachs data does not strictly satisfy linear
Gaussian assumptions. The observed improvement here is complementary to findings
in \S~\ref{subsec:rq3}, demonstrating \tool's practical robustness. Even in
low-dimensional, large-sample real-world scenarios, its approach to reliably
constraining the search space provides a critical advantage over unstructured
optimization, leading to more accurate causal discovery. \section{Related Work}

\parh{Differentiable Causal Discovery.}~Differentiable methods have recently
emerged as a promising approach for improving the accuracy and efficiency of
causal discovery. These methods reformulate the problem as a continuous
optimization task, enabling the use of gradient-based algorithms to learn causal
structures directly from data. Notable examples include
NOTEARS~\cite{zheng2018dags}, GOLEM~\cite{ng2020role},
DAGMA~\cite{bello2022dagma}, and ABIC~\cite{bhattacharya2021differentiable}.
Subsequent work has focused on enhancing performance through various means:
improving the stability and computational efficiency of acyclicity
constraints~\cite{yu2019dag, wei2020dags, zheng2020learning, bello2022dagma,
zhang2025analytic}; incorporating domain knowledge and interventional
data~\cite{ban2024differentiable, brouillard2020differentiable,
faria2022differentiable, li2023causal}; extending expressiveness to handle more
complex functional forms~\cite{lorch2021dibs, ashman2022causal,
geffner2022deep}; and increasing robustness to model
misspecification~\cite{zhangboosting, yi2025robustness}. In this paper, we
contribute a novel yet complementary enhancement by introducing a reliable
super-structure learning step into the optimization pipeline, which
significantly improves optimization efficiency.

\parh{Structure Learning.}~Structure learning is a central task in probabilistic
graphical models (PGMs), where the goal is to uncover the underlying conditional
independence structure among variables. Classical approaches typically focus on
learning structures such as Markov blankets~\cite{ling2019bamb}, conditional
independence graphs~\cite{meinshausen2006high}, or full network
structures~\cite{peters2017elements}, using score-based, constraint-based, or
hybrid algorithms. In high-dimensional settings, sparsity assumptions and
regularization techniques such as graphical Lasso~\cite{friedman2008sparse} have
been widely adopted to ensure statistical and computational tractability. While
much of this literature focuses on identifying precise conditional dependencies,
we instead aim to learn a coarser but reliable super-structure that guides
downstream differentiable optimization. Despite its simplicity, our approach
outperforms existing methods targeting exact structures on downstream tasks.

\section{Conclusion}

In this paper, we introduced \tool, a novel and general enhancement to the
differentiable causal discovery pipeline. By learning a super-structure that
captures the core causal dependencies, \tool significantly reduces the search
space and improves optimization efficiency. We demonstrated the versatility of
\tool across various structural causal models, including Gaussian and
non-Gaussian settings, with and without unmeasured confounders. Extensive
experiments on synthetic and real-world datasets showed that \tool not only
achieves state-of-the-art accuracy but also significantly improves optimization
efficiency, making it a reliable and efficient solution for differentiable
causal discovery.

\bibliographystyle{ACM-Reference-Format}
\bibliography{main}

\end{document}